\newtheorem{theorem}{Theorem}
\newtheorem{corollary}{Corollary}
\newtheorem{lemma}{Lemma}
\newtheorem{proposition}[theorem]{Proposition}
\newcommand{\kibitz}[2]{\ifnum\Comments=1{\color{#1}{#2}}\fi}
\newcommand{\rmr}[1]{\kibitz{blue}{[RESHEF:#1]}}
\newcommand{\dcp}[1]{\kibitz{orange}{[DAVID:#1]}}
\def\({\left(}
\def\){\right)}
\def\ol{\overline}
\def\ul{\underline}
\def\cite{\citep}
\def\shortcite{\citeyearpar}
\newcommand{\labeq}[2]{   
\begin{equation}
\label{eq:#1}
#2
\end{equation}}
\newcommand{\tup}[1]{\left\langle #1\right\rangle}
\renewcommand{\vec}[1]{\mathbf{#1}}
\def\({\left(}
\def\){\right)}
\def\eps{\epsilon}
\newcommand\PW[1]{PW$^{(#1)}$}
\newcommand\newpar[1]{\vspace{-0mm}\paragraph{#1}}
\def\subsec{\vspace{-0mm}\subsection}
\def\I{{\cal I}}
\newcommand{\ignore}[1]{}
\begin{document}

\title{On Sex, Evolution, and the Multiplicative Weights Update Algorithm}

\author{Submission \#231}
\author{Reshef Meir and David Parkes\\ Harvard University}

\maketitle

\begin{quote}
\textit{``Sex is the queen of problems in evolutionary biology. 
 Perhaps no other natural phenomenon has aroused so much interest; 
certainly none has sowed so much confusion.''} 
\flushright ---Graham Bell, 1982
\end{quote}

\begin{abstract}
We consider a recent innovative theory by Chastain et al. on the role of sex in evolution~\cite{PNAS_new}. 
In short, the theory suggests that the evolutionary process of gene recombination implements the celebrated multiplicative weights updates algorithm (MWUA). They prove that the population dynamics induced by sexual reproduction can be precisely modeled by genes that use MWUA as their learning strategy in a particular coordination game. The result holds in the environments of \emph{weak selection}, under the assumption that the population frequencies remain a product distribution.
	
We revisit the theory, eliminating both the requirement of weak selection and any assumption on the distribution of the population. 
Removing the assumption of product distributions is crucial, since as we show, this assumption is inconsistent with the population dynamics.
We show that the marginal allele distributions induced by the population dynamics precisely match the marginals induced by a multiplicative weights update algorithm in this general setting, thereby affirming and substantially generalizing these
earlier results.  

We further revise the implications for convergence and utility or
fitness guarantees in coordination games. In contrast to the claim of Chastain et al.~\shortcite{PNAS_new}, we conclude that the sexual evolutionary dynamics does not entail any property of the population distribution, beyond those already implied by convergence.
\end{abstract}
%
%
%
%

\section{Introduction}

Connections between the theory of evolution,
machine learning and
games have captured the imagination of researchers for decades.
Evolutionary models inspired a range of applications from genetic algorithms to the design of distributed multi-agent systems~\cite{goldberg1988genetic,cetnarowicz1996application,phelps2008auctions}.
Within game theory, several solution concepts follow evolutionary processes, and some of the most promising dynamics that lead to equilibria in games assume that players learn
the behavior of their opponents~\cite{haigh1975game,valiant2009evolvability}.





A different connection between sex, evolution and machine learning was recently suggested by Chastain, Livnat, Papadimitriou and Vazirani~\shortcite{PNAS_new}. 
 As they explain, also
referring to Barton and Charlesworth~\shortcite{barton1998sex}, sexual
reproduction is costly for the individual and for the society in terms of time and energy, and often breaks successful gene combinations. From the
perspective of an individual, sex dilutes his or her genes by only
transferring half of them to each offspring. Thus the question that arises is why sexual reproduction is so common in nature, and why is it so successful. Chastain et al.~\shortcite{PNAS_new} suggest that the evolutionary process under sexual reproduction effectively implements a celebrated no-regret learning algorithm.
The structure of their argument is as follows. 
 
First, they restrict attention to a particular class of fitness landscape where \emph{weak selection} holds. Informally, weak selection means that the fitness difference between genotypes 
 is bounded by a small constant, i.e., there are no extremely good or extremely bad gene combinations.\footnote{A gene takes on a particular form, known as {\em allele}. By a {\em genotype}, or gene combination, we refer to a set of alleles-- one for each gene. The genotype determines the properties of the creature, and hence its fitness in a given environment.}
%
%
Second, they  consider the distribution of each gene's alleles as a \emph{mixed strategy} in a matrix-form game, where there is one player for each gene. The game is an 
\emph{identical interest game}, where each player gets the same utility--- thus the joint distribution of alleles corresponds to the mixed strategy of each player, and the expected payoff of the game  corresponds to the average fitness level of the population.

Chastain et al.~\shortcite{PNAS_new} provide a  correspondence
between the sexual population dynamics  and the {\em multiplicative weights update algorithm} (MWUA)~\cite{littlestone1994weighted,cesa1997use}. 
In particular, they establish a correspondence between strategies
adopted by players in the game that adopt MWUA  and 
the population dynamics,
under an assumption that the fitness matrix is in the weak selection regime, and that the population dynamic retains
the structure of a product distribution on alleles. 
%
With this correspondence in place, 
these authors
apply the fact that using MWUA in a repeated game leads to diminishing regret for each player
to conclude that the regret experienced by genes also diminishes.
They   interpret this result as maximization of a property of the population distribution, namely ``the sum of the expected cumulative differential fitness over the alleles, plus the distribution's entropy.''


We believe that such a multiagent abstraction of the evolutionary process can contribute much to our understanding, both of evolution and of learning algorithms. Interestingly, the agents in this model are not the creatures in the population, nor Dawkins'~\cite{dawkins2006selfish} genetic properties (alleles) that compete one another, but rather genes that share a mutual cause.

\subsec{Our Contribution}

We show that the main results of Chastain et al.~\shortcite{PNAS_new}
can be substantially generalized.
Specifically, we consider the two standard population dynamics (where recombination acts before selection (RS), and vice versa (SR)), and show that each of them \emph{precisely} describes the marginal allele distribution under a variation of the multiplicative updates algorithm 
that is described for correlated strategies. This
correspondence holds for any number of genes/players, any fitness matrix,  any recombination rate, and any initial population frequencies. In particular, and in contrast to Chastain et al., we do not assume weak selection or require the population distribution 
remains a product distribution (i.e., with allele probabilities that are independent),
and we allow both the SR model and the RS model.\dcp{explain that even though their paper introduces the SR model we have clarified through personal comm that they intend the RS model, and their results only hold in this model} \rmr{i don't think this is the place. Any short sentence will be very inaccurate, and we explain this later.}
%
%

\dcp{perhaps begin this para by reminding reader why connection with MU should be of interest}
We discuss some of the implications of this correspondence between these biological and algorithmic processes for theoretical convergence properties.  Under weak selection, the observation that the cumulative regret of every gene is bounded follows immediately from known convergence results, both in population dynamics and in game theory (see related work).
 We show that under the SR dynamics, every gene still has a bounded cumulative regret, without assuming weak selection or a product distribution.

Our analysis also uncovers what we view as 
one technical gap and one conceptual gap regarding the fine details in the original argument of Chastain et al.~\shortcite{PNAS_new}. We believe that due to the far reaching consequences of the theory it is important to rectify these details.  First, according to the population dynamics the population frequencies may become highly correlated (even under weak selection), and thus it is important to avoid the assumption on product distributions. 
Second, the property that is supposedly maximized by the population dynamics is already entailed by the convergence of the process (regardless of what equilibrium is reached). We should therefore be careful when  interpreting it as some nontrivial advantage of the evolutionary process. 
%


\subsec{Related Work}

The multiplicative weights update algorithm  (MWUA) is a general name for a
broad class of methods in which a decision maker facing uncertainty
(or a player in a repeated game), updates her strategy. While
specifics vary, all variations of MWUA increase the probability of
actions that have been more successful in previous rounds.
%
%
In general games, the MWUA dynamic is known to lead to diminishing
regret over time~\cite{jafari2001no,blum2007learning,kale2007efficient,cesa2007improved}, but
does not, in general, converge to a Nash equilibrium of the game.
For some classes of games better convergence results are known; see Section~\ref{sec:converge_weak} for details.

The \emph{fundamental theorem of natural selection}, which dates back
to Fisher~\shortcite{Fisher30}, states that the population dynamics of
a single-locus diploid always increases the average fitness of each
generation, until it reaches
convergence~\cite{MS59
,li1969increment}.\footnote{Roughly, a single-locus means there is only one property that determines fitness, for example eye color or length of tail. Multiple loci mean that fitness is determined by a combination of several such properties. We explain what are diploids and haploids in the next section.} The fundamental theorem further relates the
rate of increase to the variance of fitness in the  population.
In the general case, for genotypes with more than a single locus, the
fundamental theorem does not hold, although constructing a counter
example where a cycle occurs is
non-trivial~\cite{hastings1981stable,hofbauer1984hopf}.
However, convergence of the population dynamics  has been
shown to hold when the fitness landscape has some specific properties,
such as \emph{weak selection}, or \emph{weak epistasis}~\cite{nagylaki1999convergence}.\footnote{Weak epistasis means that the
various genes have separate, nearly-additive contribution to
fitness. It is incomparable to
weak selection.}

In \emph{asexual} evolutionary dynamics, every descendent is an exact copy of a
single parent, with more fit parents producing more offspring (``survival of the fittest'').
Regardless of the number of loci, asexual dynamics coincides with MWUA by a single 
player~\cite{borgers1997learning,hopkins1999learning}.  
Chastain et al.~\shortcite{PNAS_new} were the first to suggest that a
similar correspondence can be established for sexual population
dynamics.


  


\section{Definitions}

\begin{table*}[t]
	\begin{center}
	\begin{tabular}{||l|c|c||c|l|c|c|cc|l|c|c|c}
	\cline{1-3}\cline{5-7}\cline{10-12}
	\cline{1-3}
	 $w_{ij}$ & $b_1$ & $b_2$ & ~~~~ & $p^0_{ij}$ & $b_1$& $b_2$ & $\vec y^0$ & & $p^{1}_{ij}$ & $b_1$ & $b_2$ & $\vec y^0$\\
	\cline{1-3}\cline{5-7}\cline{10-12}
 $a_1$  &  1  &  0.5 &   & $a_1$ &  0.1 & 0.15 & 0.25 & & $a_1$   & 0.094 & 0.082 & 0.174\\
\cline{1-3}\cline{5-7}\cline{10-12}
$a_2$  &  1.5  &  1.2  &   & $a_2$ & 0.1 & 0.15 & 0.25 & &$a_2$  & 0.158 & 0.170 & 0.328 \\
\cline{1-3}\cline{5-7}\cline{10-12}
$a_3$  & 1.3 & 0.8 & & $a_3$ & 0.2 & 0.3 & 0.5 & &$a_3$  & 0.255 &  0.242 & 0.497  \\
\cline{1-3}
	\cline{1-3}\cline{5-7}\cline{10-12}
\multicolumn{4}{c}{}& \multicolumn{1}{c}{$\vec x^0$} & \multicolumn{1}{c}{0.4} & \multicolumn{1}{c}{0.6} & \multicolumn{2}{c}{} &\multicolumn{1}{c}{$\vec x^1$} & \multicolumn{1}{c}{0.507} & \multicolumn{1}{c}{0.493}\\
\end{tabular}

	\caption{\label{tab:example}  An example of a 2-locus haploid fitness matrix,  with $n=3,m=2$.  There are 6 allele combinations, or genotypes. On the left we give the fitness of each combination $\tup{a_i,b_j}$. 
	In the middle, we provide an initial population distribution, and on the right we provide the distribution after one update step of the SR dynamics, for $r=0.5$.\vspace{-2mm}}
	\end{center}
	\end{table*}

We follow the definitions of Chastain et al.~\shortcite{PNAS_new} 
where
possible. For more detailed explanation of the biological terms and equations,
see B\"{u}rger~\shortcite{burger2011some}.

\subsec{Population dynamics}

A \emph{haploid} is a creature that has only one copy of each gene.
Each gene has several distinct alleles. For example, a gene for eye
color can have alleles for black, brown, green or blue color. In
contrast, people are \emph{diploids}, 
and have \emph{two copies} of each
gene, one from each parent.

Under asexual reproduction, an offspring inherits all of its genes
from its single parent. In 
the case of sexual reproduction, each parent
transfers half of its genes. Thus a haploid inherits half of its properties from one parent and half from the other parent.
%
To keep the presentation simple we focus on 
the case of a {\em 2-locus haploid}. This means that there are two
genes, denoted $A$ and $B$. In the appendix we extend the definitions and results to $k$-locus haploids, for $k>2$. 
Gene $A$ has $n$ possible alleles
$a_1,\ldots,a_n$, and gene $B$ has $m$ possible alleles $b_1,\ldots,b_m$.
 It is possible that
$n\neq m$.  We denote the set $\{1,\ldots,n\}$ by $[n]$. A pair $\tup{a_i,b_j}$ of alleles defines a \emph{genotype}.

Let $W=(w_{ij})_{i\leq n,j\leq m}$ denote a \emph{fitness matrix}.  The
fitness value $w_{ij}\in \mathbb R_+$ can be interpreted as the expected number of
offspring of a creature whose genotype is $\tup{a_i,b_j}$.
 We assume
that the fitness matrix is fixed, and does not change throughout
evolution. See Table~\ref{tab:example} for an example.

We denote by $P^t=(p_{ij}^t)_{i\leq n,j\leq m}$ the distribution of
the population at time $t$. The \emph{average fitness} $\ol w^t$ at
time $t$ is written as 
\labeq{wt}
{\ol w^t = \ol w(P^t) = \sum_{ij}p^t_{ij}w_{ij}.}
 For
example, the populations in Table~\ref{tab:example} have an average
fitness of $\ol w(P^0) = 1.005$  and $\ol w(P^1)=1.1012$.
Denote by $x^t_i=\sum_j p^t_{ij}$ and $y^t_j = \sum_i p^t_{ij}$ the
{\em marginal frequencies} at time $t$ of alleles $a_i$ and $b_j$,
respectively. 
Clearly $p^t_{ij}=x^t_i y^t_j$ for all $i,j$ iff $P^t$ is a
product distribution. In the context of population dynamics, the set
of product distributions is also called the \emph{Wright manifold}.
For general distributions, $D^t_{ij} = p^t_{ij}-x^t_i y^t_j$ is called the
\emph{linkage disequilibrium}.

 
The \emph{selection strength} of $W$ is the minimal $s$ s.t. $w_{ij}\in[1-s,1+s]$ for all $i,j$. 
We say that $W$ is in the \emph{weak selection} regime if
$s$ is very small, i.e., all of $w_{ij}$ are close to $1$.
%

\newpar{Update step}

In {\em asexual reproduction}, every creature of genotype
$\tup{a_i,b_j}$ has in expectation $w_{ij}$ offspring, all of which
are of genotype $\tup{a_i,b_j}$. Thus there is only \emph{selection}
and no \emph{recombination}, and the frequencies in the next period are
$p^{t+1}_{ij} = p^S_{ij} = \frac{w_{ij}}{ \ol w^t}p^t_{ij}$.

In {\em sexual reproduction}, every pair of creatures, say of
genotypes $\tup{a_i,b_l}$ and $\tup{a_k,b_j}$ bring offspring who may
belong (with equal probabilities) to genotype
$\tup{a_i,b_l},\tup{a_k,b_j},\tup{a_i,b_j}$, or $\tup{a_k,b_l}$. Thus,
in the next generation, a creature of genotype $\tup{a_i,b_j}$ can be
the result of combining one parent of genotype $\tup{a_i,?}$ with
another parent of genotype $\tup{?,b_j}$. There are two ways to infer the distribution of the next generation, depending on whether recombination occurs before selection or vice versa (see, e.g., \cite{michalakis1996interaction}). We describe
each of these two ways next.

\newpar{Selection before recombination (SR)}
 Summing over all possible
matches and their frequencies, and normalizing, we get:
%
$$
p^{SR}_{ij} = \frac{\sum_{l\in [m]}\sum_{k \in [n]}
    p^t_{il} w_{il} p^t_{kj} w_{kj}}{(\ol w^t)^2}.$$

In addition, the 
\emph{recombination rate}, $r\in[0,1]$, determines the
part of the genome that is being replaced in {\em crossover}, so $r=1$ means that the entire genome is the result of recombination, whereas $r=0$ means no recombination occurs, and the offspring are genetically identical to one of their parents. Given this, population frequencies in the next period are set as:
\labeq{sex_SR_r}
{p^{t+1}_{ij} = r p^{SR}_{ij} + (1-r)p^S_{ij}.} 

\newpar{Recombination before selection (RS)}
With only recombination, the frequency of the genotype $\tup{a_i,b_j}$ is the product of the respective probabilities in the previous generation, i.e., $p^R_{ij} =  x^t_i y^t_j$.
When recombination occurs before selection, we  have (before normalization):
$$p^{RS}_{ij} = w_{ij}p^R_{ij} = w_{ij} x^t_i y^t_j.$$

Taking into account the recombination rate and normalization, we get,
\labeq{sex_RS_r}
{p^{t+1}_{ij} = \frac{1}{\ol w^{R}} ( r p^{RS}_{ij} +(1- r) p^S_{ij}) =  \frac{1}{\ol w^{R}} w_{ij}(p^t_{ij} - rD^t_{ij}),}
where $\ol w^R=\sum_{ij}w_{ij}(p^t_{ij} - rD^t_{ij})$, and $D^t_{ij}$ is the linkage disequilibrium at time $t$.

\medskip

For an example of change in population
frequencies, see Tables~\ref{tab:example}-\ref{tab:example_P_r2}.
We say that $P^t$ is a \emph{stable state} under a particular dynamics, if $P^{t+1} = P^t$.

\subsec{Identical interest games}

An identical interest game of two players is defined by a payoff
matrix $G$, where $g_{ij}$ is the payoff of each player if the first
plays action $i$, and the second plays action $j$. A mixed strategy of
a player is an independent distribution over her actions.  The mixed
strategies $\vec x,\vec y$ are a \emph{Nash equilibrium} if no player
can switch to a strategy that has a strictly higher expected payoff.
That is, if for any action $i'\in [n]$, $\sum_j y_j g_{i'j} \leq
\sum_{i}\sum_j x_i y_i g_{ij}$, and similarly for any  $j'\in [m]$.

Every fitness matrix $W$ induces an identical interest game, where
$g_{ij}=w_{ij}$.  This is a game where each of the two genes selects
an allele as an action (or a distribution over alleles, as a mixed
strategy). A matrix of population frequencies $P$ can be thought of as \emph{correlated strategies} for the players. The expected payoff of each player under these strategies is $\ol
w(P)$.  Given a distribution $P$, $G|_P$ is the subgame of
$G$ induced by the support of $P$. That is, the subgame where action
$i$ is allowed iff $p_{ij}>0$ for some $j$, and likewise for action
$j$.

\subsec{Multiplicative updates algorithms}
\label{sec:MU}

Suppose that two players play a game $G$  (not necessarily identical interest) repeatedly. Each player observes the strategy
of her opponent in each turn, and can change her own strategy
accordingly.\footnote{
We assume that the player observes the full joint distribution $P^t$, and can thus infer the (expected) utility of every action $a_i$ at time $t$.}
%
One prominent approach is to gradually put more weight (i.e.,
probability) on pure actions that were good in the previous steps.
\sloppy Many variations of the multiplicative weights update algorithm (MWUA)  are built
upon this idea, and some have been applied to
strategic settings~\cite{blum2007learning,marden2009payoff,kleinberg2009multiplicative}.
We  follow  the variation used by Chastain et
al.~\shortcite{PNAS_new}. 
This variation is equivalent to the {\em Polynomial Weights} (PW) algorithm~\shortcite{cesa2007improved}, under the assumption that the utility of all actions $(a_i)_{i\leq n}$ is observed after each period (see Kale~\shortcite{kale2007efficient}, p.~10).

\begin{table}[t]
	\begin{center}
	{\small
	\begin{tabular}{|l|c|c|cc|l|c|c|c}
	\cline{1-3}\cline{6-8}
	 $p^{1}_{ij}$  (SR) & $b_1$ & $b_2$ & $\vec y^1$ & & $p^{1}_{ij}$ (RS) & $b_1$ & $b_2$ & $\vec y^1$ \\
	\cline{1-3}\cline{6-8}
 $a_1$   & 0.088 & 0.085 & 0.174 &  & $a_1$ & 0.099 & 0.074 & 0.174\\
\cline{1-3}\cline{6-8}
$a_2$  & 0.167 & 0.162 & 0.328 & & $a_2$ & 0.149 & 0.179 & 0.328\\
\cline{1-3}\cline{6-8}
$a_3$  & 0.252 &  0.245 & 0.497 & & $a_3$ &0.258 & 0.239  & 0.497\\
	\cline{1-3}\cline{6-8}
	\multicolumn{1}{c}{$\vec x^1$} & \multicolumn{1}{c}{0.507} & \multicolumn{1}{c}{0.493} & \multicolumn{1}{c}{}& \multicolumn{1}{c}{} & \multicolumn{1}{c}{$\vec x^1$} & \multicolumn{1}{c}{0.507} & \multicolumn{1}{c}{0.493} &  \\
\end{tabular}}
	\caption{\label{tab:example_P} Population frequencies $P^1$, i.e. after one update step, for $r=1$. On the left, we provide frequencies when selection occurs before recombination, whereas on the right recombination precedes selection. The updated marginal frequencies of alleles are the same in both cases.} 
	\end{center}
	\end{table}

\begin{table}[t]
	\begin{center}
	{\small
	\begin{tabular}{|l|c|c|cc|l|c|c|c}
	\cline{1-3}\cline{6-8}
	 $p^{1}_{ij}$  (SR) & $b_1$ & $b_2$ & $\vec y^1$ & & $p^{1}_{ij}$ (RS) & $b_1$ & $b_2$ & $\vec y^1$ \\
	\cline{1-3}\cline{6-8}
 $a_1$   & 0.094 & 0.082 & 0.174 &  & $a_1$ & 0.099 & 0.074 & 0.174\\
\cline{1-3}\cline{6-8}
$a_2$  & 0.158 & 0.170 & 0.328 & & $a_2$ & 0.149 & 0.179 & 0.328\\
\cline{1-3}\cline{6-8}
$a_3$  & 0.255 &  0.242 & 0.497 & & $a_3$ &0.258 & 0.239  & 0.497\\
	\cline{1-3}\cline{6-8}
	\multicolumn{1}{c}{$\vec x^1$} & \multicolumn{1}{c}{0.507} & \multicolumn{1}{c}{0.493} & \multicolumn{1}{c}{}& \multicolumn{1}{c}{} & \multicolumn{1}{c}{$\vec x^1$} & \multicolumn{1}{c}{0.507} & \multicolumn{1}{c}{0.493} &  \\
\end{tabular}}
	\caption{\label{tab:example_P_r} Population frequencies $P^1$ for $r=0.5$. Under RS we get the same frequencies as with $r=1$ (and in fact any other value of $r$). This is because when $P^0$ is a product distribution there is  no effect of recombination under RS. }
	\end{center}
	\end{table}

\begin{table}[t]
	\begin{center}
	\begin{small}
	\begin{tabular}{|l|c|c|cc|l|c|c|c}
	\cline{1-3}\cline{6-8}
	 $p^{2}_{ij}$  (SR) & $b_1$ & $b_2$ & $\vec y^2$ & & $p^{2}_{ij}$ (RS) & $b_1$ & $b_2$ & $\vec y^2$ \\
	\cline{1-3}\cline{6-8}
 $a_1$   & 0.079 & 0.042 & 0.122 &  & $a_1$ & 0.09 & 0.034 & 0.124\\
\cline{1-3}\cline{6-8}
$a_2$  & 0.228 & 0.172 & 0.401 & & $a_2$ & 0.203 & 0.195 & 0.398\\
\cline{1-3}\cline{6-8}
$a_3$  & 0.294 &  0.183 & 0.477 & & $a_3$ &0.305 & 0.173  & 0.478\\
	\cline{1-3}\cline{6-8}
	\multicolumn{1}{c}{$\vec x^2$} & \multicolumn{1}{c}{0.602} & \multicolumn{1}{c}{0.398} & \multicolumn{1}{c}{}& \multicolumn{1}{c}{} & \multicolumn{1}{c}{$\vec x^2$} & \multicolumn{1}{c}{0.598} & \multicolumn{1}{c}{0.402} &  \\
\end{tabular}\end{small}
	\caption{\label{tab:example_P_r2} Population frequencies $P^2$, i.e. after two update steps for $r=0.5$. The marginal distributions are no longer the same, since $P^1$ under RS is not a product distribution.}
	\end{center}
	\end{table}

\newpar{Polynomial Weights}
We use the term PW to distinguish this from
other variations of MWUA.  For any $\eps> 0$, the
$\eps$-PW algorithm for a \emph{single} decision maker is defined as follows. Suppose first that in time $t$,
the player uses strategy $\vec x^t$. Let $g^t_i$ be the utility to the player
when playing some pure action $i\in [n]$.
According to the $\eps$-PW algorithm, the strategy of the player in
the next step would be
%
$x^{t+1}(i) \cong x^t(i) (1+\eps g^t_i)$,
where $\cong$ stands for ``proportional to'' (we need to  normalize, since $\vec x^{t+1}$ has to be a valid distribution). 
%
A special case of the algorithm is the limit case $\eps\rightarrow
\infty$, where $x^{t+1}(i) \cong x^t(i) g^t_i$;
i.e., the
probability of playing an action increases proportionally to its
expected performance in the previous round. Unless specified
otherwise we assume this limit case, which we refer to as the 
\emph{parameter-free PW}. 

\newpar{PW in Games}
The fundamental feature of the PW algorithm is that the probability of playing action $a_i$ changes proportionally to the expected utility of  action $a_i$.

%
Consider 2-player game $G$, where $g_{ij}$ is the utility (of both players if $G$ is an identical interest game) from the joint action $\tup{a_i,b_j}$. In the context of a game, we can think of at least two different interpretations of the utility of playing $a_i$, derived from the joint distribution $P$.
For this, let $y^t_j(a_i) = P^t(b_j | a_i)$, i.e., the probability that player~2 plays $b_j$ \emph{given that} player~1 plays $a_i$, according to the distribution $P^t$.  
The two interpretations we have in mind are:
\begin{itemize}
	\item Set $\ul g^t_i = \sum_j y^t_j(a_i) g_{ij}$. This is the expected utility that player~1 would get for playing $a_i$ in round $t$. This definition is consistent with common interpretation of expected utility in games (e.g., in Kale~\shortcite{kale2007efficient}, Sec.~2.3.1).
\item Set $\ol g^t_i = \sum_j y^t_j g_{ij}$. This is the expected utility that player~1 will get in the next round for playing $a_i$ if player~2 will select an action independently according to her current marginal probabilities. 
Thus each agent updates her strategy \emph{as if} the strategies are independent, and ignoring any observed correlation. 
This definition 
results in the PW algorithm  used in Chastain et al.~\shortcite{PNAS_new}.
\end{itemize}


\medskip 
The above definitions require some discussion.
While the traditional assumption is that each player only observes a \emph{sample} from the joint distribution at each round, and updates the strategy based on the empirical distribution, strategy updates can also be performed in the same way when the player observes the joint distribution at round $t$, even if it is hard to imagine such a case occurs in practice. 

\if 0
Intuitively, under the first interpretation of expected utility the player considers correlation (as reflected for example in the empirical joint distribution), whereas under the second interpretation the player assumes independence, and then uses the marginals to compute the expected utility.
Clearly, when $P^t$ is a product distribution (as in Chastain et al.~\shortcite{PNAS_new}) then $\ol g^t_i = \ul g^t_i $, and the algorithms coincide.

\fi

Intuitively, under the first interpretation, the player considers correlation, whereas under the second the player assumes independence, and then uses the marginals to compute the expected utility.
E.g suppose that players play Rock-Paper-Scissors, and the history is 100 repetitions of the sequence [(R,P) (P,S) (S,S)]. Then under the first interpretation the best action for agent~1 is S (since it leads to the best expected utility); whereas under the second interpretation the best action for agent~1 is R, since agent~2 is more likely to play S.\footnote{We can also think of the two approaches as the two sides of Newcomb's paradox~\cite{nozick1969newcomb}: the player observes a correlation, even though deciding on a strategy cannot change the expected utility of each action. Thus it is not obvious whether the observed correlation should be considered in the strategy update.}   
Clearly, when $P^t$ is a product distribution (as in Chastain et al.~\shortcite{PNAS_new}) then $\ol g^t_i = \ul g^t_i $, and the algorithms coincide.
We can also combine the two interpretations to induce new algorithms. We thus define the \PW{\alpha} algorithm (either Parameter-free \PW{\alpha} or $\eps$-\PW{\alpha}), where the probability of playing $a_i$ is updated according to $g^{t,(\alpha)}_i = \alpha \ol g^t_i + (1-\alpha) \ul g^t_i$.

\newpar{Exponential Weights}

The \emph{Hedge} algorithm~\cite{freund1995desicion,freund1999adaptive} is another 
variation of
MWUA that is very similar to PW. The difference is that the weight of
action $i$ in each step changes by a factor that is exponential in the
utility, rather than linear. That is, $x^{t+1}(i) \cong x^t(i)
(1+\eps)^{g^t_i}$. For negligible $\eps>0$,
$\eps$-Hedge and $\eps$-PW are essentially the same, but for large
$\eps$ they may behave quite differently.
	


\section{Analysis of the SR dynamics}

In this section we prove that the SR population dynamics of marginal
allele frequencies coincide precisely with the multiplicative updates
dynamics in the corresponding game.  This extends Theorem~4 in
Chastain et al.~\shortcite{PNAS_new} (SI text), in that it holds without weak selection or the assumption
of product distributions through multiple iterations. We also generalize
the proposition to hold for any number of loci/players in
Appendix~\ref{apx:loci}.

\begin{proposition}
\label{th:SR_PW}
Let $W$ be any fitness matrix, and consider the game $G$ where $g_{ij}=w_{ij}$. Then under the SR population dynamics, for any distribution $P^t$ and any $r\in[0,1]$,
we have
$x^{t+1}_i = \frac{1}{\ol w^t} x^t_i \ul g^t_i$.
\end{proposition}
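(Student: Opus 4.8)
The plan is to compute the marginal $x^{t+1}_i = \sum_j p^{t+1}_{ij}$ directly from the SR update rule $p^{t+1}_{ij} = r\,p^{SR}_{ij} + (1-r)\,p^S_{ij}$, and to show that \emph{each} of its two constituent pieces --- the pure-selection part $p^S$ and the recombination-after-selection part $p^{SR}$ --- contributes exactly $\frac{1}{\ol w^t} x^t_i \ul g^t_i$ to the marginal on $a_i$. Since $p^{t+1}$ is a convex combination of the two, this will immediately yield the claimed identity for every $r$, with the dependence on $r$ dropping out entirely.

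The first observation I would record is the algebraic identity $x^t_i \ul g^t_i = \sum_j p^t_{ij} w_{ij}$, which follows from $\ul g^t_i = \sum_j y^t_j(a_i)\, g_{ij} = \sum_j (p^t_{ij}/x^t_i)\, w_{ij}$ together with $g_{ij}=w_{ij}$. This rewrites the target quantity in terms of the raw joint frequencies and fitnesses, which is precisely the form that emerges from the dynamics. The selection-only marginal is then immediate: summing $p^S_{ij} = (w_{ij}/\ol w^t)\, p^t_{ij}$ over $j$ gives $\sum_j p^S_{ij} = \frac{1}{\ol w^t}\sum_j w_{ij} p^t_{ij} = \frac{1}{\ol w^t} x^t_i \ul g^t_i$.

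The main work is the recombination marginal $\sum_j p^{SR}_{ij}$. Here the key step is to notice that the numerator $\sum_{l\in[m]}\sum_{k\in[n]} p^t_{il} w_{il}\, p^t_{kj} w_{kj}$ factors into a term depending only on $(i,l)$ times a term depending only on $(k,j)$. After summing over $j$, the two factors separate as $\bigl(\sum_l p^t_{il} w_{il}\bigr)\bigl(\sum_{k,j} p^t_{kj} w_{kj}\bigr) = \bigl(x^t_i \ul g^t_i\bigr)\,\ol w^t$, and dividing by $(\ol w^t)^2$ again produces $\frac{1}{\ol w^t} x^t_i \ul g^t_i$. Combining the two pieces then trivially gives $x^{t+1}_i = r\cdot\frac{1}{\ol w^t}x^t_i\ul g^t_i + (1-r)\cdot\frac{1}{\ol w^t}x^t_i\ul g^t_i = \frac{1}{\ol w^t}x^t_i\ul g^t_i$, independently of $r$.

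I expect the conceptual crux --- rather than any technical obstacle --- to be the fact that recombination leaves the marginal allele frequencies unchanged relative to pure selection: $p^S$ and $p^{SR}$ induce the \emph{same} marginal on $a_i$, even though $p^{SR}$ generally destroys the correlation structure and alters the linkage disequilibrium $D^t_{ij}$. The entire argument relies only on the factorization above, so I would emphasize that no assumption of weak selection or of a product distribution is invoked anywhere, which is exactly the generalization being claimed over Theorem~4 of Chastain et al.
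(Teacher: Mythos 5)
Your proposal is correct and follows essentially the same route as the paper's proof: marginalize the SR update over $j$, factor the double sum in the $p^{SR}$ term into $\bigl(\sum_l p^t_{il}w_{il}\bigr)\bigl(\sum_{k,j}p^t_{kj}w_{kj}\bigr)$, recognize the second factor as $\ol w^t$, and rewrite $\sum_j p^t_{ij}w_{ij}$ as $x^t_i \ul g^t_i$ via the conditional probabilities $y^t_j(a_i)$. Your presentation --- showing that $p^S$ and $p^{SR}$ each contribute the identical marginal $\frac{1}{\ol w^t}x^t_i\ul g^t_i$ before taking the convex combination --- is a slightly cleaner packaging of the same computation and makes the $r$-independence conceptually transparent, just as the paper notes at the end of its proof.
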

\begin{proof}\vspace{-0mm} 
By the SR population dynamics (Eq.~\eqref{eq:sex_SR_r}),
\begin{align*}
x^{t+1}_i &= \sum_j p^{t+1}_{ij} = \sum_j (r p^{SR}_{ij} +(1-r)p^{S}_{ij}) \\
&= r\sum_j \frac{\sum_l\sum_k p^t_{il} w_{il} p^t_{kj} w_{kj}}{(\ol w^t)^2} + (1-r)\sum_j \frac{p^t_{ij}w_{ij}}{\ol w^t}\\
& = r\frac{1}{\ol w^t}\sum_l  p^t_{il} w_{il} \frac{1}{\ol w^t}\sum_k \sum_j p^t_{kj} w_{kj} + (1-r)\sum_j \frac{p^t_{ij}w_{ij}}{\ol w^t} 
\end{align*}
Then since $\sum_k \sum_j p^t_{kj} w_{kj}$ is the average fitness at time $t$,
\begin{align*}
x^{t+1}_i& = r\frac{1}{\ol w^t}\sum_j  p^t_{ij} w_{ij} \frac{1}{\ol w^t} \ol w^t + (1-r)\sum_j \frac{p^t_{ij}w_{ij}}{\ol w^t}\\
& =  r\frac{1}{\ol w^t}\sum_j  p^t_{ij} w_{ij}  + (1-r)\sum_j \frac{p^t_{ij}w_{ij}}{\ol w^t}\\
& = \frac{1}{\ol w^t}\sum_j  p^t_{ij} w_{ij} = \frac{1}{\ol w^t}\sum_j  x^t_{i} y^t_j(a_i) w_{ij}, 
\end{align*}
thus the recombination factor $r$ does not play a direct role in the new marginal under the SR dynamics. It does have an indirect role though, since it affects the correlation, and thus the marginal distribution at the next generation $t+2$. 
\end{proof}

Theorem~4 in Chastain et al.~\shortcite{PNAS_new} follows as a special case when $P^t$ is a product distribution. By repeatedly applying Proposition~\ref{th:SR_PW}, we get the following result, which
holds for any value of $r$.
\begin{corollary}
\label{th:dynamics_cor_SR}
Let $W$ be a fitness matrix, $P^0$ be any distribution. Suppose that $P^{t+1}$ is attained from $P^t$ by the SR population dynamics, and that $\vec x^{t+1}, \vec y^{t+1}$ are attained from $P^t$ by players using  the parameter-free \PW{0} algorithm in the game $G=W$. Then for all $t>0$ and any $i$,  $x^t_i= \sum_{j}p^t_{ij}$. 
\end{corollary}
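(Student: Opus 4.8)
The plan is to prove the claim by induction on $t$, showing that the strategy vector $\vec x^t$ produced by the parameter-free \PW{0} algorithm coincides at every step with the marginal $\sum_j p^t_{ij}$ of the population generated by the SR dynamics (and symmetrically that $y^t_j$ coincides with $\sum_i p^t_{ij}$). The entire argument rests on a single observation: after normalization, the \PW{0} update rule is literally the expression for the next marginal established in Proposition~\ref{th:SR_PW}. So once the two processes are shown to agree at time $t$, the proposition forces them to agree at time $t+1$.

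First I would settle the base case. By hypothesis both processes are launched from the same joint distribution $P^0$, so the initial \PW{0} strategy is $x^0_i=\sum_j p^0_{ij}$, which anchors the induction. For the inductive step, assume $x^t_i=\sum_j p^t_{ij}$ for every $i$. The parameter-free \PW{0} rule sets $x^{t+1}_i \cong x^t_i\,\ul g^t_i$, where $\ul g^t_i=\sum_j y^t_j(a_i)\,w_{ij}$ and $y^t_j(a_i)=P^t(b_j\mid a_i)=p^t_{ij}/x^t_i$. Under the inductive hypothesis the unnormalized weight is therefore $x^t_i\,\ul g^t_i=\sum_j p^t_{ij}\,w_{ij}$, and the normalizing constant is $\sum_i x^t_i\,\ul g^t_i=\sum_{ij}p^t_{ij}\,w_{ij}=\ol w^t$. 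Hence $x^{t+1}_i=\frac{1}{\ol w^t}\,x^t_i\,\ul g^t_i$, which is exactly the right-hand side appearing in Proposition~\ref{th:SR_PW}. Since that proposition asserts $\sum_j p^{t+1}_{ij}=\frac{1}{\ol w^t}\,x^t_i\,\ul g^t_i$ for the SR marginal, we conclude $x^{t+1}_i=\sum_j p^{t+1}_{ij}$, closing the induction. The identical computation applied to gene $B$ gives $y^{t+1}_j=\sum_i p^{t+1}_{ij}$, and $r$ never enters, so the conclusion holds for every recombination rate.

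The step I expect to require the most care is not any calculation but the bookkeeping of which distribution the \PW{0} players observe. The utility $\ul g^t_i$ must be formed from the \emph{conditional} probabilities $P^t(b_j\mid a_i)$ of the correlated population distribution, and it is precisely the $\alpha=0$ (conditional) variant of the algorithm---rather than the marginal-based $\ol g^t_i$ variant---that makes the two unnormalized weights equal term by term. Confirming this alignment, together with the fact that both processes share the same normalizer $\ol w^t$, is the crux; once it is established the induction follows immediately from Proposition~\ref{th:SR_PW}. It is worth emphasizing that this is exactly where the assumption of a product distribution would have been invoked in the original argument: if $P^t$ were a product distribution then $\ul g^t_i=\ol g^t_i$ and either variant would do, but in general the correspondence survives only for \PW{0}.
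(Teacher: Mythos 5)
Your proof is correct and takes essentially the same route as the paper: the paper's entire argument is ``by repeatedly applying Proposition~\ref{th:SR_PW},'' and your induction simply makes that explicit, correctly identifying that under the inductive hypothesis the \PW{0} weight $x^t_i\,\ul g^t_i$ equals $\sum_j p^t_{ij}w_{ij}$ with normalizer $\ol w^t$, so the two processes agree step by step. Your closing observation---that the conditional ($\alpha=0$) utilities are exactly what makes the correspondence survive without product distributions, while $r$ never enters---is precisely the point the paper emphasizes around this corollary.
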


It is important to note that the marginal distributions $\vec x^t,\vec y^t$ do not determine $P^t$ completely. Thus the PW algorithm specifies the strategy of each player (regardless of $r$), but not how these strategies are correlated. 


\section{Analysis of the RS dynamics}

Turning to the RS population dynamics, 
our starting point is Lemma~3 in Chastain et al.~\shortcite{PNAS_new} (SI text),
which states that $p^{t+1}_{ij} = \frac{1}{\ol w^R} w_{ij} x^t_{i} y^t_j$ (under the assumption that $P^t$ is a product distribution). We establish a similar property  for general distributions.
We use the fact that for any fitness matrix $W$ and distribution $P^t$, 
$$p^{t+1}_{ij} = \frac{1}{\ol w^R} (r  w_{ij} x^t_{i} y^t_{j} + (1-r) w_{ij} p^t_{ij}).$$

This follows immediately from the definition (Eq.~\eqref{eq:sex_RS_r}).
%
Recall that $g^{t,(r)}_i  =    \(r \ol g^t_{i} + (1-r) \ul g^t_{i}\)$. 
We  derive an alternative extension of Theorem~4 in Chastain et al.~\shortcite{PNAS_new} (SI text) for the RS dynamics.
\begin{proposition} Let $W$ be any fitness matrix, and consider the game $G$ where $g_{ij}=w_{ij}$. Then under the RS population dynamics, for any distribution $P^t$ and any $r\in[0,1]$,
$x^{t+1}_{i} = \frac{1}{\ol w^R} x^t_i g^{t,(r)}_i$.
\label{th:RS_PW}
\end{proposition}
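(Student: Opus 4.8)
The plan is to compute the marginal $x^{t+1}_i = \sum_j p^{t+1}_{ij}$ directly from the RS update rule, mirroring the calculation in the proof of Proposition~\ref{th:SR_PW}, but keeping the recombination and selection contributions separate rather than collapsing them. My starting point is the form stated just before the proposition, $p^{t+1}_{ij} = \frac{1}{\ol w^R}\bigl(r\, w_{ij} x^t_i y^t_j + (1-r)\, w_{ij} p^t_{ij}\bigr)$, which follows from Eq.~\eqref{eq:sex_RS_r} and splits the update cleanly into a recombined piece and a pure-selection piece.

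First I would sum over $j$, pull the normalizer $\frac{1}{\ol w^R}$ outside, and in the recombination term factor out the constant $x^t_i$, writing that contribution as $r\, x^t_i \sum_j w_{ij} y^t_j$. By the definition $\ol g^t_i = \sum_j y^t_j g_{ij}$ with $g_{ij}=w_{ij}$, the remaining sum is exactly $\ol g^t_i$, so the recombination piece equals $r\, x^t_i \ol g^t_i$.

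The one step that needs a small observation is the selection term $(1-r)\sum_j w_{ij} p^t_{ij}$. Here I would use the identity $p^t_{ij} = x^t_i\, y^t_j(a_i)$, which holds because $y^t_j(a_i) = P^t(b_j\mid a_i) = p^t_{ij}/x^t_i$ is precisely the conditional marginal. Factoring out $x^t_i$ turns the sum into $x^t_i \sum_j w_{ij} y^t_j(a_i) = x^t_i \ul g^t_i$ by the definition $\ul g^t_i = \sum_j y^t_j(a_i) g_{ij}$.

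Combining the two contributions gives $x^{t+1}_i = \frac{1}{\ol w^R} x^t_i\bigl(r\ol g^t_i + (1-r)\ul g^t_i\bigr) = \frac{1}{\ol w^R} x^t_i g^{t,(r)}_i$, which is the claim. The computation is essentially routine and I expect no genuine obstacle; the only conceptual point is that, unlike the SR case where both terms reduce to $\ul g^t_i$ and $r$ drops out of the marginal, under RS the recombination term yields $\ol g^t_i$ (using the true marginal $y^t_j$) while the selection term yields $\ul g^t_i$ (using the conditional $y^t_j(a_i)$), so the two remain distinct and their $r$-weighted combination is exactly the mixed utility $g^{t,(r)}_i$. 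The main thing to get right is not to conflate $y^t_j$ with $y^t_j(a_i)$ across the two terms.
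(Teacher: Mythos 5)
Your proposal is correct and follows essentially the same route as the paper's own proof: both start from the split form $p^{t+1}_{ij} = \frac{1}{\ol w^R}\(r\, w_{ij} x^t_i y^t_j + (1-r)\, w_{ij} p^t_{ij}\)$, sum over $j$, substitute $p^t_{ij} = x^t_i y^t_j(a_i)$ in the selection term, and identify the two sums as $\ol g^t_i$ and $\ul g^t_i$ respectively. Your closing observation about keeping $y^t_j$ and $y^t_j(a_i)$ distinct is exactly the point the paper emphasizes in contrasting RS with SR, so there is nothing to add.
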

\begin{proof}
By the RS population dynamics (Eq.~\eqref{eq:sex_RS_r}),
\begin{align*}
x^{t+1}_{i} &= \sum_{j}p^{t+1}_{ij} = \sum_{j} \frac{1}{\ol w^R} (r  w_{ij} x^t_{i} y^t_{j} + (1-r) w_{ij} p^t_{ij})\\ 
&= \frac{1}{\ol w^R}\sum_{j} (r  w_{ij} x^t_{i} y^t_{j} + (1-r) w_{ij} x^t_{i}y^t_j(a_i))\\ 
&= \frac{1}{\ol w^R} x^t_i ( r \sum_j w_{ij}  y^t_{j} + (1-r) \sum_j w_{ij} y^t_{j}(a_i)).\\
\end{align*}
Finally, by the definitions of $\ol g^t_{i}$ and $\ul g^t_{i}$,
  $$x^{t+1}_{i} = \frac{1}{\ol w^R} x^t_i  \(r \ol g^t_{i} + (1-r) \ul g^t_{i}\) = \frac{1}{\ol w^R} x^t_i g^{t,(r)}_i.$$

In contrast to the SR dynamics, here $r$ appears explicitly in the marginal distribution $\vec x^{t+1}$. 
\end{proof}

So we get that under RS the marginal frequency of allele $a_i$ is updated according to an expected utility that takes only part of the correlation into account. This part is proportional to the recombination rate $r$. We get a similar result to Corollary~\ref{th:dynamics_cor_SR}:

\begin{corollary}
\label{th:dynamics_cor_RS}
Let $W$ be a fitness matrix, $P^0$ be any distribution. Suppose that $P^{t+1}$ is attained from $P^t$ by the RS population dynamics, and that $\vec x^{t+1}, \vec y^{t+1}$ are attained from $P^t$ by players using  the parameter-free \PW{r} algorithm in the game $G=W$. Then for all $t>0$ and any $i$,  $x^t_i= \sum_{j}p^t_{ij}$. 
\end{corollary}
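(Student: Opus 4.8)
The plan is to prove the statement by induction on $t$, using Proposition~\ref{th:RS_PW} as the single-step engine, exactly as Corollary~\ref{th:dynamics_cor_SR} follows from Proposition~\ref{th:SR_PW}. Both processes consume the same trajectory of joint distributions: the RS population dynamics generates $P^0, P^1, P^2, \ldots$, and at each step the parameter-free \PW{r} algorithm observes the current joint $P^t$ (as permitted by the footnote in Section~\ref{sec:MU}) and updates each player's marginal strategy. I would take the induction hypothesis to be that at time $t$ both players' \PW{r} strategies coincide with the corresponding population marginals, i.e. $x^t_i = \sum_j p^t_{ij}$ and $y^t_j = \sum_i p^t_{ij}$. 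The base case $t=0$ holds by initializing $\vec x^0, \vec y^0$ to the marginals of $P^0$.

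For the inductive step, I would first recall that the parameter-free \PW{r} update is $x^{t+1}(i) \cong x^t(i)\, g^{t,(r)}_i$ with $g^{t,(r)}_i = r\ol g^t_i + (1-r)\ul g^t_i$, so that after normalization $x^{t+1}_i = \frac{1}{Z} x^t_i g^{t,(r)}_i$ where $Z = \sum_i x^t_i g^{t,(r)}_i$. The one genuinely mechanical point is to check that this normalizer $Z$ equals $\ol w^R$: expanding, $\sum_i x^t_i \ul g^t_i = \sum_{ij} p^t_{ij} w_{ij} = \ol w^t$ (using $x^t_i y^t_j(a_i) = p^t_{ij}$) and $\sum_i x^t_i \ol g^t_i = \sum_{ij} x^t_i y^t_j w_{ij}$, so $Z = (1-r)\ol w^t + r\sum_{ij} w_{ij} x^t_i y^t_j$, which is exactly $\ol w^R = \sum_{ij} w_{ij}(p^t_{ij} - r D^t_{ij})$ after substituting $D^t_{ij} = p^t_{ij} - x^t_i y^t_j$. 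Hence the \PW{r} strategy update coincides, term for term, with the marginal update $x^{t+1}_i = \frac{1}{\ol w^R} x^t_i g^{t,(r)}_i$ that Proposition~\ref{th:RS_PW} derives for the population dynamics.

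With this identity in hand the induction closes immediately: both $g^{t,(r)}_i$ and the normalizer are functions of the common joint distribution $P^t$, and by the induction hypothesis the \PW{r} marginal $\vec x^t$ equals the population marginal, so the two updates produce the identical vector $\vec x^{t+1}$; the symmetric argument gives $\vec y^{t+1}$. I would note explicitly that this pins down only the marginals, since, as remarked after Corollary~\ref{th:dynamics_cor_SR}, $\vec x^t$ and $\vec y^t$ do not determine $P^t$, yet it is the full $P^t$ (carrying the linkage disequilibrium $D^t_{ij}$) that feeds the next update. I expect the main obstacle to be expository rather than technical: one must state the induction hypothesis over both marginals simultaneously and emphasize that \PW{r} reads the population's \emph{joint} distribution each round, since the conditional-based term $\ul g^t_i$ cannot be reconstructed from the marginals alone. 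This is precisely the dependence on $P^t$ that makes the correspondence nontrivial away from the Wright manifold.
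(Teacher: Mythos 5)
Your proof is correct and takes essentially the same route as the paper: the corollary is obtained by repeated application of Proposition~\ref{th:RS_PW}, exactly as Corollary~\ref{th:dynamics_cor_SR} follows from Proposition~\ref{th:SR_PW}, with the induction hypothesis coupling both players' \PW{r} strategies to the population marginals while the joint $P^t$ drives each update. Your explicit verification that the \PW{r} normalizer $\sum_i x^t_i g^{t,(r)}_i$ equals $\ol w^R$ is a mechanical detail the paper leaves implicit, and you carry it out correctly.
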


\section{Convergence and Equilibrium}
\label{sec:convergence}

In this section, we consider implications of the general theory on the
correspondence between sexual population dynamics and multiplicative-weights algorithms
on convergence properties.

\subsec{Diminishing regret}
In Chastain et al.~\shortcite{PNAS_new} (Sections~3 and 4 of the SI text), the authors apply standard properties of MWUA  to show that the \emph{cumulative external regret} of each gene is bounded (Corollary~5 there). In other words, if in retrospect gene~1 would have ``played'' some fixed allele $a_i$ throughout the game, the cumulative fitness (summing over all iterations) would not have been much better. This result leans only on  the properties of the  algorithm, and does not require the independence of strategies. Thus we will write a similar regret bound  explicitly in our more general model.

Consider any fitness matrix $W$ whose selection strength is $s$.
Let $AF^T_i = \frac{1}{T}\sum_{t=1}^T \ul g^t_i$ (the average fitness in retrospect if allele~$a_i$ had been used throughout the game), and $AF_{SR}^T = \frac{1}{T}\sum_{t=1}^T \ol w^t$ (the actual average fitness under the SR dynamics). 

\begin{corollary}\label{th:regret}
For any $T\in \mathbb N$, any  $s\in(0,\frac12)$ and all $i\leq n$, 
$AF_{SR}^T \geq AF^T_i - s^2 - \ln(n)/T$.
\end{corollary}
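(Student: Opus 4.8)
The plan is to observe that Proposition~\ref{th:SR_PW} already exhibits the SR marginals as an exact run of parameter-free PW, and then to carry out the standard multiplicative-weights potential argument on these marginals. First I would pin down the normalizer: writing $y^t_j(a_k)=p^t_{kj}/x^t_k$, we have $\sum_k x^t_k\ul g^t_k=\sum_{k,j}p^t_{kj}w_{kj}=\ol w^t$, so the update from Proposition~\ref{th:SR_PW} reads exactly $x^{t+1}_i = x^t_i\,\ul g^t_i/\ol w^t$, with the average fitness $\ol w^t$ playing the role of the PW normalization constant.

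Next I would unroll this recursion across all $T$ steps to obtain
\[
x^{T+1}_i = x^1_i\prod_{t=1}^{T}\frac{\ul g^t_i}{\ol w^t}.
\]
Using $x^{T+1}_i\le 1$ together with a uniform initial marginal $x^1_i=1/n$ gives $\prod_{t}\ul g^t_i \le n\prod_{t}\ol w^t$. Taking logarithms turns this product bound into the additive inequality $\sum_t\ln\ul g^t_i \le \ln n + \sum_t\ln\ol w^t$, which is the right shape for a per-step regret statement.

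The technical core is then to linearize both logarithms. On the left, since the selection strength bound gives $\ul g^t_i-1\in[-s,s]$ with $s<\tfrac12$, I would use $\ln(1+\delta)\ge \delta-\delta^2$ (valid for $\delta>-\tfrac12$, which is exactly why the hypothesis $s\in(0,\tfrac12)$ appears) and then $\delta^2\le s^2$, yielding $\ln\ul g^t_i \ge (\ul g^t_i-1)-s^2$. On the right I would use the global bound $\ln\ol w^t\le \ol w^t-1$. Substituting these, the $-1$ terms cancel within each summand; after dividing by $T$ and recognizing $AF^T_i=\frac1T\sum_t\ul g^t_i$ and $AF^T_{SR}=\frac1T\sum_t\ol w^t$, the claimed bound $AF^T_{SR}\ge AF^T_i - s^2 - \ln(n)/T$ falls out, with the $\ln(n)/T$ term arising directly from $\ln x^1_i=-\ln n$.

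I expect the main obstacle to be purely in the linearization step rather than anywhere structural: arranging the two one-sided logarithm inequalities so they point in compatible directions and squeeze out \emph{precisely} the constant $s^2$ (rather than $s^2/2$ or something larger), and confirming that $s<\tfrac12$ is exactly the condition under which $\ln(1+\delta)\ge\delta-\delta^2$ holds over the whole range $\delta\in[-s,s]$. A secondary point worth stating explicitly is the dependence on initialization: the bound as written presumes $x^1_i\ge 1/n$ (a uniform start), since otherwise the term $\ln x^1_i/T$ could be more negative than $-\ln(n)/T$.
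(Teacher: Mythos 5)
Your proof is correct, and it takes a genuinely different route from the paper's. The paper does not re-derive the multiplicative-weights bound at all: it rescales to the ``differential game'' $\Delta_{ij}=(w_{ij}-1)/s$, notes that parameter-free PW on $W$ is exactly $\eps$-PW with $\eps=s$ on the gains $m^{(t)}_i=\sum_j y^t_j(a_i)\Delta_{ij}\in[-1,1]$, and invokes Kale's Theorem~3 as a black box,
$\sum_{t=1}^T \vec x^{(t)}\cdot\vec m^{(t)} \geq \sum_{t=1}^T m^{(t)}_i - \eps\sum_{t=1}^T (m^{(t)}_i)^2 - \ln(n)/\eps$,
converting back via $\vec x^{(t)}\cdot\vec m^{(t)}=(\ol w^t-1)/s$ --- which is the same computation as your normalizer identity $\sum_k x^t_k\,\ul g^t_k=\ol w^t$. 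Your telescoping argument ($x^{T+1}_i=x^1_i\prod_t \ul g^t_i/\ol w^t\leq 1$, take logs, linearize with $\ln(1+\delta)\geq\delta-\delta^2$ on one side and $\ln \ol w^t\leq \ol w^t-1$ on the other) reproduces the same constants and is sound: $\ul g^t_i\in[1-s,1+s]$ as a convex combination of the $w_{ij}$, so the logarithms are well defined and the quadratic error is at most $s^2$ per step. Two small calibration points. First, the hypothesis $s\in(0,\frac12)$ in the paper is inherited from the $\eps\leq\frac12$ condition in Kale's theorem; in your argument it appears as the validity range of $\ln(1+\delta)\geq\delta-\delta^2$, but that inequality actually holds down to $\delta\approx -0.68$, so $s<\frac12$ is sufficient with slack rather than ``exactly'' the threshold --- a harmless overstatement. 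Second, your explicit flag about initialization is accurate and is arguably a point where your write-up is more careful than the corollary's statement: the $\ln(n)/T$ term presumes $x^1_i\geq 1/n$, an assumption that in the paper is buried inside the cited theorem (with general initialization the term becomes $\ln(1/x^1_i)/T$, consistent with the entropy-based refinement of Chastain et al.\ that the paper mentions). What each approach buys: yours is self-contained and, in effect, writes out the ``slight modification of the analysis'' that the paper only footnotes when adapting Kale's Hedge analysis to PW; the paper's route keeps the connection to the standard no-regret machinery explicit and foregrounds that the bound depends only on gene~1's update rule, holding against arbitrary, even adversarial, behavior of the other gene --- a property your proof also enjoys, since it uses nothing about how $\ul g^t_i$ is generated, though you might state that invariance explicitly.
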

\begin{proof}
Set $\Delta_{ij} = \frac{w_{ij}-1}{s}$;  $m^{(t)}_i =\sum_j y^t_j(a_i)\Delta_{ij}$, and $\eps=s$. Note that $\Delta_{ij}$ and   $m^{(t)}_i$ are in the range $[-1,1]$. Intuitively, $m^{(t)}_i$ is the expected profit of player~1 from playing action $a_i$ in the ``differential game'' $\frac{W-1}{s}$. 

Theorem~3~\cite{kale2007efficient} states that under the $\eps$-PW algorithm\footnote{Kale~\shortcite{kale2007efficient} analyzes the Exponential Weights  algorithm but a slight modification of the analysis works for PW.} 
\labeq{from_kale}
{\sum_{t=1}^T \vec x^{(t)} \vec m^{(t)}  \geq \sum_{t=1}^T m_i^{(t)} - \eps\sum_{t=1}^T ( m_i^{(t)})^2 - \frac{\ln(n)}{\eps},}
where $x^{(t)}_i$ is the probability that the decision maker chose action $a_i$ in iteration $t$ (thus $x^{(t)}_i=x^t_i$ by our notation).

The proof follows directly from the theorem. Observe that  $m^{(t)}_i =(\ul g^t_i -1)/s$, and 
\begin{align*}
\vec x^{(t)} \cdot  \vec m^{(t)} &= \sum_i x^t_i m^{(t)}_i  = \sum_i x^t_i \sum_j y^t_j(a_i)\Delta_{ij} \\
&= \sum_{ij}p^t_{ij} \frac{w_{ij}-1}{s} = \frac{\ol w^t-1}{s}.
\end{align*}
Thus 
\begin{align*}
AF_{i}^T &= \frac{1}{T}\sum_{t=1}^T \ul g^t_i = \frac{1}{T}\sum_{t=1}^T  s m^{(t)}_i + 1 = 1+ \eps  \frac{1}{T}\sum_{t=1}^T  m^{(t)}_i\\
AF_{SR}^T &= \frac{1}{T}\sum_{t=1}^T \ol w^t = \frac{1}{T}\sum_{t=1}^T (1+ s\vec m^{(t)} \cdot \vec p^{(t)})\\
& = 1+\eps \frac{1}{T}\sum_{t=1}^T \vec m^{(t)} \cdot \vec p^{(t)}. \tag{replacing $\eps=s$}
\end{align*}
Plugging in Eq.~\eqref{eq:from_kale},
\begin{align*}
AF_{SR}^T &\geq 1+\eps\frac{1}{T}\(\sum_{t=1}^T m_i^{(t)} - \eps\sum_{t=1}^T  (m_i^{(t)})^2 - \frac{\ln(n)}{\eps}\) \\
&\geq 1+\eps\frac{1}{T}\(\sum_{t=1}^T m_i^{(t)} - \eps\sum_{t=1}^T  1 - \frac{\ln(n)}{\eps}\) \\
&= \(1 + \eps \frac{1}{T}\sum_{t=1}^T m_i^{(t)}\) - \eps^2 - \frac{1}{T}\ln(n) \\
&= AF_{i}^T - s^2 - \ln(n)/T,
\end{align*}
as required.
\end{proof}

We highlight that the bound on the regret of each player (or gene)
stated in this result depends only on the algorithm used by the agent,
and not on the strategies of other agents. These may be independent,
correlated, or even chosen in an adversarial manner. For simplicity we
present the proof for two players/genes. The extension to any number
of players is immediate because the theorem bounds the regret of each
agent separately.

By taking $s$ to zero and $T$ to infinity, we get that the average cumulative regret $AF_{SR}^T - AF^T_i$ tends to zero, as stated in Chastain et al.~\shortcite{PNAS_new} (they use a more refined form of the inequality that contains the entropy of $P^t$ rather than $\ln n$). 

For the RS dynamics we get something similar but not quite the same. Since $g^{t,(r)}_i$ is not exactly the expected fitness at time $t$, we get that cumulative regret is diminishing but not w.r.t. the actual average fitness $\ol w^t$. That is, the regret is determined as if the actual expected fitness of action $a_i$ is $g^{t,(r)}_i$. More formally, we get a variation of Corollary~\ref{th:regret}, where $AF^T_i = \frac{1}{T}\sum_{t=1}^T g^{t,(r)}_i$ and $AF_{RS}^T = \frac{1}{T}\sum_{t=1}^T \sum_i x^t_i g^{t,(r)}_i$.


\subsec{Convergence under weak selection}
\label{sec:converge_weak}
%

In normal-form games, following strategies with bounded or diminishing regret does not, in general, guarantee convergence to a fixed point in the strategy space.\footnote{It is known that the average joint distribution over all iterations converges to the set of correlated equilibria~\cite{blum2007learning}. This is less relevant to us because we are interested in the limit of $P^t$.} For some classes of games though, much more is known. For example, if all players in a {\em potential game} apply the $\eps$-Hedge algorithm, with a sufficiently small $\eps$, then $P^t$ converges to a Nash equilibrium~\cite{kleinberg2009multiplicative}, and
almost always to a pure Nash equilibrium. Similar results have been shown for \emph{concave games}~\cite{even2009convergence}. Since identical-interest games are both potential games and concave games, and since for small $\eps$ we have that Hedge and PW are essentially the same, these results apply to our setting. This means that
{\em under each of RS and SR dynamics, the population converges to a stable state, for a sufficiently low selection strength $s$. }

This implication is not new, and has been shown independently in the evolutionary biology literature. Indeed, Nagylaki et al.~\shortcite{nagylaki1999convergence} prove that under weak selection, the population dynamics converges to a point distribution from any initial state (that is, to a Nash equilibrium of the subgame induced by the support of the initial distribution).  
Note that under weak selection, Corollary~\ref{th:regret} becomes trivial: once in a pure Nash equilibrium $(a_{i^*},b_{j^*})$ (say at time $t^*$), the optimal action of agent~1 is to keep playing $a^*_i$. Thus for any $t>t^*$, $\ol w^t = \ul g^t_{i^*}$, and the cumulative regret does not increase further.

\ignore{
\newpar{Does sex lead to increased diversity?}

\rmr{since the result on convergence to a point is already known, I think maybe removing this paragraph completely} \dcp{agree, seems to me that you  could remove this
para and the next one}
In the last part of their paper, Chastain et al.~\shortcite{ppad} show
that uniformly sampled fitness matrices  have many mixed Nash
equilibria (in fact a number exponential in the size of their
support).   By Proposition~\ref{ob:Nash}, every such mixed equilibrium is also a mixed stable
state of the population dynamics.

Chastain et al.~\shortcite{ppad} argue that the existence of many
mixed equilibria indicates that sex promotes \emph{diversity}. While
the connection between sexual reproduction and diversity is often
mentioned, the fact that convergence is almost always to a point
distribution \dcp{point to an earlier result} seems to undermine this
diversity argument.  Although many non-pure equilibria exist, the chances
of actually reaching one of them are negligible in
the current model.
}

\section{Discussion}
Chastain et al.~\shortcite{PNAS_new} extend
an interesting connection
between evolution, learning, and games 
from asexual reproduction (i.e.,
{\em replicator dynamics}) to sexual reproduction.
 The proof of Theorem~4 in Chastain et al.~\shortcite{PNAS_new} gives a formal meaning to 
this connection. Namely, that the strategy update of each player who is using \PW{1} in the  fitness game, coincides with the change in allele frequencies of the corresponding gene (under weak selection and product distributions). This relation is generalized in our Propositions~\ref{th:SR_PW} and \ref{th:RS_PW}, since for product distributions \PW{\alpha} is the same for all $\alpha$.\dcp{this sounds like the kind of comment I'd put in a footnote at the start or end of section 3.}

Chastain et al.~\shortcite{PNAS_new} also claim something stronger: that the population dynamics is \emph{precisely} the PW dynamics. The natural formal interpretation of this conclusion would be in the spirit of our Corollary~\ref{th:dynamics_cor_SR}, i.e., that allele distributions and players' strategies would coincide after any number of steps. In our case we prove this for
the marginal probabilities.
But as we have discussed, their conclusion only follows from their Theorem~4 
under the assumption that $P^t$ remains a product distribution. 
This is counterfactual, in that $P^{t+1}$ is in general not a product distribution (under their assumptions on the dynamic process), and thus the next step of the \PW{r} algorithm and the population dynamics 
would not be precisely equivalent but only \emph{approximately} equivalent.\dcp{why say SR and RS here. I find confusing} The approximation becomes less accurate in each step, and in fact even under weak selection the population dynamics may diverge from the Wright manifold, or  converge  to a different outcome than the \PW{r} algorithm, as we show in Appendix~\ref{apx:differ}. Thus while the intuition of Chatain et al.~\shortcite{PNAS_new} was correct, the only way to rectify their analysis is via the more general proof without assumptions on the selection strength (even if we accept weak selection as biologically plausible). 

\newpar{What does evolution maximize?} In Chastain et al.~\shortcite{PNAS_new} (Corollary~5 in the SI text), it is 
also shown that under weak
selection, ``population genetics is tantamount to each gene optimizing
at generation $t$ a quantity equal to the cumulative expected fitness
over all generations up to $t$,'' (plus the entropy). 
While this is technically correct (our Cor.~\ref{th:regret} is a restatement of this result), we feel that an unwary reader might reach the wrong impression, that this is a mathematical explanation of some guarantee on the average fitness of the population. We thus emphasize that the both  \shortcite{PNAS_new} and our paper establish only the property of \emph{diminishing regret}, which is 
already implied when $P^t$ converges to a Nash equilibrium. Players
never have regret in a Nash equilibrium, and thus the cumulative regret tends to zero after the equilibrium is played sufficiently many times. 

Thus the population dynamics cannot provide any guarantees on fitness (or on any other property) that are not already implied by an arbitrary Nash equilibrium. In the evolutionary context this means that the outcome can be as bad as the worst local maximum of the fitness matrix.
%
Also note that convergence is to a \emph{point distribution} (a pure Nash equilibrium, see Sec.~\ref{sec:converge_weak}), and thus its entropy is $0$ and irrelevant for the maximization claim.

\newpar{Convergence without weak selection} 
It is an open question as to what other natural conditions are sufficient to
guarantee convergence of sexual population dynamics. 
We have conducted 
simulations that 
show that convergence to a pure equilibrium
 occurs w.h.p. even without weak selection, and in fact the
convergence speed increases as selection strength $s$ (or the learning rate $\eps$) grows.
At the same time, the quality of the solution/population reached seems to be the same regardless of the selection strength/learning rate (we measured quality as the fitness of the local maximum the dynamics converged to, normalized w.r.t. the global maximum). 
Both trends are visible in Figure~\ref{fig:stats} for $8\times 5$ matrices, based on 1000 instances for each plot. Similar results are obtained with other sizes of matrices.

\begin{figure}
\begin{center}
\includegraphics[height=44mm,width=41mm]{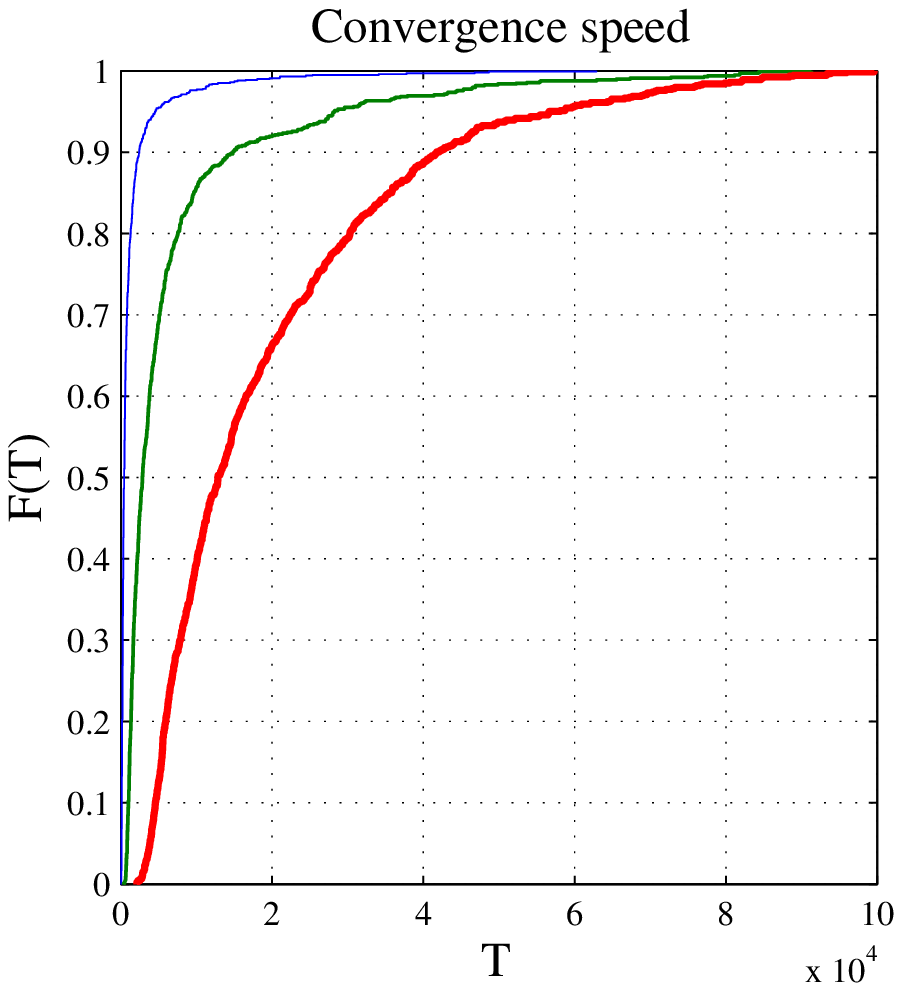}~
\includegraphics[height=44mm,width=41mm]{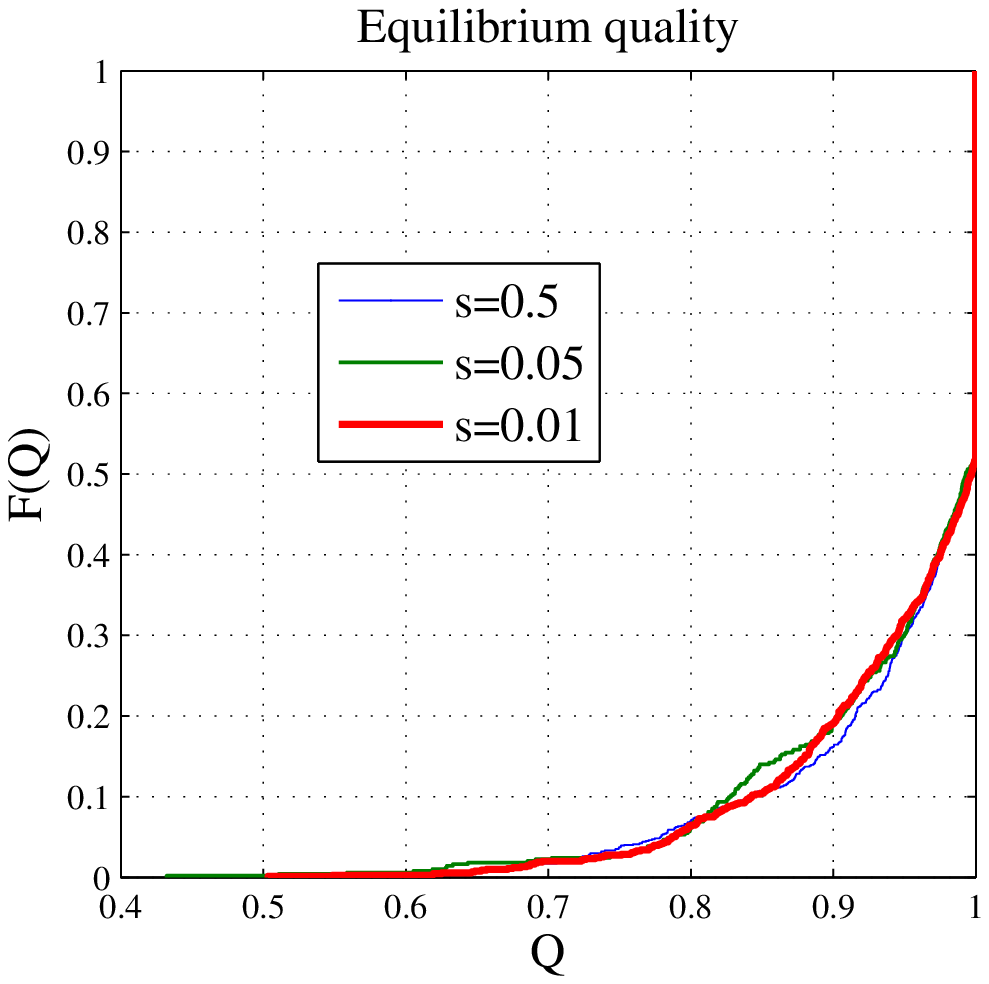}
\caption{\label{fig:stats} { On the left, $F(T)$ is the fraction of instances that converge (reach $\max_{ij} p^t_{ij}>1-10^{-5}$) within at most $T$ generations. On the right, $F(Q)$ is the fraction of instances that converge to an equilibrium with quality $q$ at most $Q$, where $q=\frac{\ol w^t-1}{\max_{ij}w_{ij}-1}$ (the ratio between average fitness in the equilibrium that was reached, and the optimal fitness).
Results are for random fitness matrices of size $8\times 5$, where $w_{ij}$ is 
sampled uniformly at random from $[1-s,1+s]$.} Note that most instances do not converge to the optimal outcome.}
\end{center}
\end{figure}

However, it is known that the sexual population dynamics on general fitness matrices (even on $4 \times 4$ matrices) does not always converge, and explicit examples have been constructed~\cite{hastings1981stable,akin1983hopf,hofbauer1984hopf}.
By Corollaries~\ref{th:dynamics_cor_SR} and \ref{th:dynamics_cor_RS}, convergence of the PW algorithm to a pure Nash equilibrium, and convergence of the population dynamics to a point distribution is the same thing. 
Thus characterizing the conditions under which these dynamics converge will answer two questions at once.

\newpar{Conclusions}

We formally describe a precise connection between population dynamics
and the multiplicative weights update algorithm. For this connection, we adopt
a version of MWUA that takes the
correlation of player strategies into account, while still supporting
no regret claims.  More specifically, two different variations of the
Polynomial Weights subclass of MWUA each coincide 
with the marginal allele distribution under the two common sexual
population dynamics (SR and RS). It is important to note that the
correspondence that we establish is between the marginal
frequencies/probabilities, rather than the full joint distribution.
%
%
 %

Notably, weak selection is not required to make these
connections.
 Yet, it is known that weak selection provides an additional
guarantee, which is  that the dynamics converge to a particular
population distribution~\cite{nagylaki1993evolution}. It remains an open question to understand what other conditions are sufficient for convergence of the PW algorithm in identical interest games. Solving this question will also
uncover more cases where the fundamental theorem of natural selection
applies.


%


\section*{Acknowledgments}
We thank Avrim Blum, Yishay Mansour and James Zou for helpful discussions. We also acknowledge a useful correspondence with the authors of Chastain et al.~\shortcite{PNAS_new}, who clarified many points about their paper. Any mistakes and misunderstandings remain our own.

\bibliographystyle{named}
\newpage
\onecolumn
\appendix

\section{Extension to Multiple Genes}
\label{apx:loci}
A $k$-locus haploid has $k$ genes, each of which is inherited from one of its two parents. 
In this appendix we show how to extend our main results to a haploid with $k > 2$ loci.

\subsection{Notation}
We consider a haploid with $k$ loci, each with $n_j$ alleles, $j\leq k$.
We denote $K=[k] = \{1,2,\ldots,k\}$, and use $J$ to denote subsets of $K$. 

A genotype is defined by a vector of indices $\vec i_K=\tup{i_1,\ldots,i_k}$ for $i_j\in [n_j]$.  We denote by $\I(J)$ the set of all $\prod_{j\in J}n_j$ partial index vectors of the form $\tup{i_j}_{j\in J}$.
 We sometimes concatenate two or more partial genotypes:  $\vec i''_{J''} = \tup{\vec i_{J},\vec i'_{J'}}$ for some $\vec i_J\in\I(J), \vec i'_{J'}\in \I(J')$. We use $-J$ to denote $K\setminus J$.

The fitness of a genotype $\vec i_K$ is denoted by $w_{\vec i_K}$. $W=(w_{\vec i_K})_{\vec i_K\in \I(K)}$ is called the \emph{fitness landscape} (which is a matrix for $k=2$). Similarly, the population frequency of genotype $\vec i_K$ at time $t$ is denoted by $p_{\vec i_K}^t$, and $P^t= (p^t_{\vec i_K})_{\vec i_K\in \I(K)}$. 

The average fitness at time $t$ is
\labeq{avg_fitness}
{\ol w^t = \sum_{\vec i_K\in \I(K)} p^t_{\vec i_K} w_{\vec i_K} = \sum_{i_1=1}^{n_1}\cdots\sum_{i_k=1}^{n_k} p^t_{\vec i_K} w_{\vec i_K}.}
Let $\vec x^t_j$ be the marginal distribution of locus $j\in K$ at time $t$, i.e., for all $i_j\in [n_j]$,
$$x^t_{i_j} = \sum_{\vec i_{-j}\in \I(-j)} p^t_{i_j,\vec i_{-j}}.$$
In the special case of 2 loci, $K = \{1,2\}$, and $x^t_{i_1}, x^t_{l_2}$ correspond to $x^t_i, y^t_l$ as used in the main text.
We also define the \emph{marginal fitness} of allele $i_j\in [n_j]$ at time $t$ as the average fitness of all the population with allele $i_j$. That is, 
\labeq{marg}
{w^t_{i_j} = \sum_{\vec i_{-j}\in \I(-j)} p^t(\vec i_{-j}|i_j) w_{i_j,\vec i_{-j}}.}

\subsection{RS dynamics}

According to the multi-dimensional extension of $p^{t+1}_{\vec i_K}$, 
\labeq{RS_multi}
{p^{t+1}_{\vec i_K} = \frac{1}{\ol w^R} (r  w_{\vec i_K} \prod_{j\in K}x^t_{i_j} + (1-r) w_{\vec i_K} p^t_{\vec i_K}).}

Given a game $G$ and a joint distribution $P^t$, let $\ol g^t_{i_j}=\sum_{\vec i_{-j} \in \I(-j)} (\prod_{j'\in \I(-j)}x^t_{i_{j'}}) g_{i_j,\vec i_{-j}}$. That is, the expected utility of playing $a_{i_j}$ when every agent $j'$ independently plays $\vec x^t_{j'}$.

\begin{lemma} 
\label{lemma:RS}
 Let $W$ be any fitness matrix, and consider the game $G=W$. Then under the RS population dynamics, for any distribution $P^t$ and any $r\in[0,1]$,
$$x^{t+1}_{i_j} = \frac{1}{\ol w^R} x^t_{i_j} g^{t,(r)}_{i_j}.$$
\end{lemma}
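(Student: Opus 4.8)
The plan is to mirror the two-locus argument of Proposition~\ref{th:RS_PW}, the only change being that the single summation index is replaced by a partial index vector. First I would obtain the marginal by summing the multi-locus RS update (Eq.~\eqref{eq:RS_multi}) over all $\vec i_{-j}\in\I(-j)$:
$$x^{t+1}_{i_j} = \frac{1}{\ol w^R}\sum_{\vec i_{-j}\in\I(-j)}\left( r\,w_{i_j,\vec i_{-j}}\prod_{j'\in K}x^t_{i_{j'}} + (1-r)\,w_{i_j,\vec i_{-j}}\,p^t_{i_j,\vec i_{-j}}\right).$$

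The heart of the proof is to extract the common factor $x^t_{i_j}$ from both terms. In the recombination term this is immediate, since $\prod_{j'\in K}x^t_{i_{j'}} = x^t_{i_j}\prod_{j'\in -j}x^t_{i_{j'}}$. In the selection term I would rewrite the joint frequency via conditional probability, $p^t_{i_j,\vec i_{-j}} = x^t_{i_j}\,p^t(\vec i_{-j}\,|\,i_j)$. Pulling $x^t_{i_j}$ out front then leaves two inner sums: the first is exactly $\ol g^t_{i_j}$ as defined just after Eq.~\eqref{eq:RS_multi} (the expected fitness of $a_{i_j}$ when every other locus is drawn independently from its marginal), and the second is $\ul g^t_{i_j}$, the conditional marginal fitness $w^t_{i_j}$ of Eq.~\eqref{eq:marg} specialized to $g=w$. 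Recognizing these two sums and using $g^{t,(r)}_{i_j}=r\,\ol g^t_{i_j}+(1-r)\,\ul g^t_{i_j}$ delivers the claim.

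The computation is routine, so I do not expect a genuine obstacle; the only point needing care is the bookkeeping of where the common factor $x^t_{i_j}$ comes from. It arises in two structurally different ways---by splitting the product over all loci in the recombination term, and by factoring a conditional out of the joint frequency in the selection term---and it is precisely this shared factor that lets $r$ and $1-r$ weight the ``independent'' expected fitness $\ol g^t_{i_j}$ against the ``correlated'' expected fitness $\ul g^t_{i_j}$, exactly as in the $k=2$ case. I would also note in passing that the product $\prod_{j'\in -j}x^t_{i_{j'}}$ indexes over the remaining loci rather than over partial genotypes, so the summation over $\I(-j)$ factorizes cleanly in the recombination term, which is what makes the identification with $\ol g^t_{i_j}$ exact.
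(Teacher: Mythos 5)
Your proposal is correct and follows essentially the same route as the paper's proof: sum Eq.~\eqref{eq:RS_multi} over $\vec i_{-j}\in\I(-j)$, factor out $x^t_{i_j}$ (via the product split in the recombination term and via $p^t_{i_j,\vec i_{-j}}=x^t_{i_j}\,P^t(\vec i_{-j}|i_j)$ in the selection term), and identify the remaining sums as $\ol g^t_{i_j}$ and $\ul g^t_{i_j}$. Your closing remark that the product ranges over the remaining loci $j'\in -j$ rather than over partial genotypes is also right, and in fact reads the paper's slightly abusive notation $\prod_{j'\in\I(-j)}$ the way it was intended.
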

\begin{proof}
\begin{align*}
x^{t+1}_{i_j} &= \sum_{\vec i_{-j} \in \I(-j)} p^{t+1}_{i_j,\vec i_{-j}} \tag{By definition} \\
&=\sum_{\vec i_{-j} \in \I(-j)}\frac{1}{\ol w^R} (r  w_{i_j,\vec i_{-j}} x^t_{i_j}\prod_{j'\in \cal I(-j)}x^t_{i_{j'}} + (1-r) w_{i_j,\vec i_{-j}} p^t_{i_j,\vec i_{-j}})  \tag{By Eq.~\eqref{eq:RS_multi}} \\
&=\frac{1}{\ol w^R}x^t_{i_j}\(r\sum_{\vec i_{-j} \in \I(-j)} w_{i_j,\vec i_{-j}}\prod_{j'\in \cal I(-j)}x^t_{i_{j'}} \right. \\
~~~~&+ \left. (1-r)\sum_{\vec i_{-j} \in \I(-j)}w_{i_j,\vec i_{-j}} P^t(\vec i_{-j}|i_j)\)\\
&=\frac{1}{\ol w^R}x^t_{i_j}\(r \ol g^t_i + (1-r)\ul g^t_i\) = \frac{1}{\ol w^R}x^t_{i_j} g^{t,(r)}_i. 
\end{align*}
\end{proof}

%
%
%
%
%

\subsection{SR dynamics}
The SR population dynamics under sexual reproduction is defined as:
\labeq{pop}
{p^{t+1}_{\vec i_K} = r\sum_{\vec i'_K\in \I(K)} \frac{1}{2^k}\sum_{J \subseteq K}\frac{p^t_{\vec i_J,\vec i'_{-J}} w_{\vec i_J,\vec i'_{-J}} p^t_{\vec i'_J,\vec i_{-J}} w_{\vec i'_J,\vec i_{-J}}}{(\ol w^t)^2}  + (1-r)\frac{w_{\vec i_K} p_{\vec i_K}}{\ol w^t}.}
We can think of $J$ as the set of genes that are inherited  from the ``first'' parent, and $-J$ as the set of genes that are inherited form the ``second'' parent. Thus a possible genotype of the offspring of parents with genotypes $\vec i, \vec i'$ is $\tup{\vec i_J,\vec i'_{-J}}$.   


\begin{lemma}
\label{lemma:SR}
Let $W$ be any fitness landscape, then under the SR dynamics, $$x^{t+1}_{i_j} = \frac{1}{\ol w^t} \sum_{\vec i_{-j} \in \I(-j)} p^t_{i_j,\vec i_{-j}} w_{i_j,\vec i_{-j}}.$$ 
\end{lemma}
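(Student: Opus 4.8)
The plan is to follow the two-locus proof of Proposition~\ref{th:SR_PW}, where the marginal update turned out to be insensitive to the recombination rate $r$, and to show the same cancellation survives in the multi-locus setting. I would start from the multi-locus SR dynamics in Eq.~\eqref{eq:pop} and compute $x^{t+1}_{i_j} = \sum_{\vec i_{-j}\in\I(-j)} p^{t+1}_{i_j,\vec i_{-j}}$ by summing its two additive pieces separately. Writing $S_{i_j} := \sum_{\vec i_{-j}\in\I(-j)} p^t_{i_j,\vec i_{-j}} w_{i_j,\vec i_{-j}}$ (so that the claim is exactly $x^{t+1}_{i_j} = S_{i_j}/\ol w^t$), the selection piece, carrying weight $(1-r)$, immediately contributes $(1-r)\,S_{i_j}/\ol w^t$. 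The entire task therefore reduces to showing that the recombination piece, carrying weight $r$, contributes $r\,S_{i_j}/\ol w^t$.

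For the recombination piece I would fix a single crossover pattern $J\subseteq K$ inside the sum and analyze the inner double product over $\vec i'_K$ and the free coordinates of $\vec i_{-j}$. The crucial observation is that gene $j$ lies in exactly one of $J$ or $-J$. If $j\in J$, then in the offspring $\vec i_K$ the $j$-coordinate is supplied by the ``first'' parent $\tup{\vec i_J,\vec i'_{-J}}$, whose $j$-coordinate is the fixed allele $i_j$, whereas the ``second'' parent $\tup{\vec i'_J,\vec i_{-J}}$ has $j$-coordinate $i'_j$, a free summation variable. Because the first-parent factor depends only on $(\vec i_{J\setminus\{j\}},\vec i'_{-J})$ and the second-parent factor only on $(\vec i'_J,\vec i_{-J})$, the double sum factors into a product of two independent sums: the first-parent factor $p^t_{\vec i_J,\vec i'_{-J}} w_{\vec i_J,\vec i'_{-J}}$ summed over all of its coordinates except the fixed $i_j$ yields exactly $S_{i_j}$, while the second-parent factor $p^t_{\vec i'_J,\vec i_{-J}} w_{\vec i'_J,\vec i_{-J}}$ summed over all of its coordinates yields exactly $\ol w^t$ by Eq.~\eqref{eq:avg_fitness}. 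If instead $j\in -J$, the identical argument applies with the two parents interchanged. Either way, each subset $J$ contributes $S_{i_j}\,\ol w^t/(\ol w^t)^2 = S_{i_j}/\ol w^t$.

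Summing over all $2^k$ subsets $J$ against the prefactor $\frac{1}{2^k}$, the recombination piece evaluates to $r\cdot\frac{1}{2^k}\cdot 2^k\cdot\frac{S_{i_j}}{\ol w^t} = r\,S_{i_j}/\ol w^t$. Combining this with the selection piece gives $x^{t+1}_{i_j} = \big(r+(1-r)\big)\,S_{i_j}/\ol w^t = S_{i_j}/\ol w^t$, which is precisely the statement. Exactly as in the two-locus case, the recombination rate $r$ drops out of the marginal update entirely.

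I expect the main obstacle to be the bookkeeping in the factorization step, namely verifying that the free summation variables drawn from $\vec i_{-j}$ and from $\vec i'_K$ partition cleanly between the two parents. Concretely, when $j\in J$ the first parent carries the coordinates $(i_j,\vec i_{J\setminus\{j\}},\vec i'_{-J})$ and the second parent carries $(\vec i'_J,\vec i_{-J})$; one must check that $\vec i_{J\setminus\{j\}}$ together with $\vec i_{-J}$ recover all of $\vec i_{-j}$, and that $\vec i'_{-J}$ together with $\vec i'_J$ recover all of $\vec i'_K$, with no shared index between the two factors. This disjointness is what licenses splitting the double sum into the product $S_{i_j}\cdot\ol w^t$, and the symmetric case $j\in -J$ demands the same care. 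Once this partition is confirmed, the remaining manipulations are routine.
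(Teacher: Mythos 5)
Your proposal is correct and follows essentially the same route as the paper's proof: split the update into the $(1-r)$ selection piece and the $r$ recombination piece, and for each crossover pattern $J$ factorize the double sum into $\bigl(\sum_{\vec i_{-j}} p^t_{i_j,\vec i_{-j}} w_{i_j,\vec i_{-j}}\bigr)\cdot \ol w^t / (\ol w^t)^2$, so that the $2^k$ subsets against the prefactor $\frac{1}{2^k}$ make $r$ cancel. The only cosmetic difference is bookkeeping: the paper pairs each $J \subseteq K\setminus\{j\}$ with $J^* = J \cup \{j\}$ and sums over $2^{k-1}$ pairs, whereas you treat a generic $J \subseteq K$ with a case split on whether $j \in J$, which is the same argument.
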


\begin{proof}
Let $J^* = J\cup \{j\}$, $-J^*=K\setminus (J \cup \{j\})$. 

\begin{align*}
x^{t+1}_{i_j} &= \sum_{\vec i_{-j} \in \I(-j)} p^{t+1}_{i_j,\vec i_{-j}} \tag{By definition} \\
=& \sum_{\vec i_{-j} \in \I(-j)} (r\sum_{\vec i'_K\in \I(K)} \frac{1}{2^k}\sum_{J \subseteq K}\frac{1}{(\ol w^t)^2} p^t_{\vec i_J,\vec i'_{-J}} w_{\vec i_J,\vec i'_{-J}} p^t_{\vec i'_J,\vec i_{-J}} w_{\vec i'_J,\vec i_{-J}}\\
&~~~+ (1-r)\frac{p^t_{i_j,\vec i_{-j}} w_{i_j,\vec i_{-j}}}{\ol w^t})
\tag{By Eq.~\eqref{eq:pop}}\\
=&r\frac{1}{2^k}\sum_{J \subseteq K}\sum_{\vec i_{-j} \in \I(-j)} \sum_{\substack { \vec i'_J\in \I(J) \\ \vec i'_{-J}\in \I(-J)}} \frac{1}{(\ol w^t)^2} p^t_{\vec i_J,\vec i'_{-J}} w_{\vec i_J,\vec i'_{-J}} p^t_{\vec i'_J,\vec i_{-J}} w_{\vec i'_J,\vec i_{-J}}\\
&+(1-r)\sum_{\vec i_{-j} \in \I(-j)}\frac{p^t_{i_j,\vec i_{-j}} w_{i_j,\vec i_{-j}}}{\ol w^t}\\
=&r C + (1-r) D
\end{align*}

We first analyze part $C$:
\begin{align*}
C=&  \frac{1}{2^k(\ol w^t)^2} \sum_{J \subseteq -\{j\}}\sum_{\substack{ \vec i_{J} \in \I(J) \\ \vec i_{-J^*} \in \I(-J^*)}}
\sum_{\substack{\vec i'_J\in \I(J) \\ \vec i'_{-J}\in \I(-J)}} p^t_{\vec i_J,\vec i'_{-J}} w_{\vec i_J,\vec i'_{-J}} p^t_{\vec i'_J,\vec i_{-J}} w_{\vec i'_J,\vec i_{-J}}.\\
 &+ \sum_{J \subseteq -\{j\}}\sum_{\substack{\vec i_{J} \in \I(J)\\ \vec i_{-J^*} \in \I(-J^*) }}
\sum_{\substack{ \vec i'_{J^*}\in \I(J^*) \\ \vec i'_{-J^*}\in \I(-J^*)}}
 p^t_{\vec i_{J^*},\vec i'_{-J^*}} w_{\vec i_{J^*},\vec i'_{-J^*}} p^t_{\vec i'_{J^*},\vec i_{-J^*}} w_{\vec i'_{J^*},\vec i_{-J^*}} \\
=&  \frac{1}{2^k(\ol w^t)^2} \sum_{J \subseteq -\{j\}}\( \sum_{\vec i_{-J^*} \in \I(-J^*)}\sum_{\vec i'_J\in \I(J)} p^t_{\vec i'_J,\vec i_{-J}} w_{\vec i'_J,\vec i_{-J}} \sum_{\vec i_{J} \in \I(J)}\sum_{\vec i'_{-J}\in \I(-J)} p^t_{\vec i_J,\vec i'_{-J}} w_{\vec i_J,\vec i'_{-J}} \right.\\
 &\left. + \sum_{\vec i_{J} \in \I(J)} \sum_{\vec i'_{-J^*}\in \I(-J^*)} p^t_{\vec i_{J^*},\vec i'_{-J^*}} w_{\vec i_{J^*},\vec i'_{-J^*}} \sum_{\vec i'_{J^*}\in \I(J^*)}\sum_{\vec i_{-J^*} \in \I(-J^*)} p^t_{\vec i'_{J^*},\vec i_{-J^*}} w_{\vec i'_{J^*},\vec i_{-J^*}}\) \\
=&  \frac{1}{2^k(\ol w^t)^2} \sum_{J \subseteq -\{j\}}\( \sum_{\vec i_{-J^*} \in \I(-J^*)}\sum_{\vec i'_J\in \I(J)} p^t_{\vec i'_J,\vec i_{-J}} w_{\vec i'_J,\vec i_{-J}} \sum_{\vec i''_{K} \in \I(K)} p^t_{\vec i''_K} w_{\vec i''_K} \right.\\
 &\left. + \sum_{\vec i_{J} \in \I(J)} \sum_{\vec i'_{-J^*}\in \I(-J^*)} p^t_{\vec i_{J^*},\vec i'_{-J^*}} w_{\vec i_{J^*},\vec i'_{-J^*}} \sum_{\vec i''_{K}\in \I(K)} p^t_{\vec i''_{K}} w_{\vec i''_{K}}\) \\
=& \frac{1}{2^k(\ol w^t)^2} \sum_{J \subseteq -\{j\}}\( \sum_{\vec i_{-J^*} \in \I(-J^*)}\sum_{\vec i'_J\in \I(J)} p^t_{\vec i'_J,\vec i_{-J}} w_{\vec i'_J,\vec i_{-J}} \ol w^t \right.\\
 &\left. + \sum_{\vec i_{J} \in \I(J)} \sum_{\vec i'_{-J^*}\in \I(-J^*)} p^t_{\vec i_{J^*},\vec i'_{-J^*}} w_{\vec i_{J^*},\vec i'_{-J^*}} \ol w^t\) \tag{By Eq.~\eqref{eq:avg_fitness}}\\
=& \frac{1}{\ol w^t} \frac{1}{2^k}\sum_{J \subseteq -\{j\}}\( \sum_{\vec i_{-j} \in \I(-j)} p^t_{i_j,\vec i_{-j}} w_{i_j,\vec i_{-j}} +\sum_{\vec i_{-j} \in \I(-j)} p^t_{i_j,\vec i_{-j}} w_{i_j,\vec i_{-j}} \) \\
=& \frac{1}{\ol w^t}\sum_{\vec i_{-j} \in \I(-j)} p^t_{i_j,\vec i_{-j}} w_{i_j,\vec i_{-j}} \frac{1}{2^{k-1}}\sum_{J \subseteq -\{j\}}1 \\
=&  \frac{1}{\ol w^t} \sum_{\vec i_{-j} \in \I(-j)} p^t_{i_j,\vec i_{-j}} w_{i_j,\vec i_{-j}}.
\end{align*}
\end{proof}

Finally, 
\begin{align*}
x^{t+1}_{i_j} &= rC+(1-r)D = r \frac{1}{\ol w^t} \sum_{\vec i_{-j} \in \I(-j)} p^t_{i_j,\vec i_{-j}} w_{i_j,\vec i_{-j}} + (1-r)\sum_{\vec i_{-j} \in \I(-j)}\frac{p^t_{i_j,\vec i_{-j}} w_{i_j,\vec i_{-j}}}{\ol w^t} \\
&= \frac{1}{\ol w^t} \sum_{\vec i_{-j} \in \I(-j)} p^t_{i_j,\vec i_{-j}} w_{i_j,\vec i_{-j}}.
\end{align*}

As in the case of two loci, we use the lemma to show that under the SR dynamics, the marginal distribution of gene $j\in K$ develops as if gene $j$ is applying the PW algorithm. 

Given a game $G$ and a joint distribution $P^t$, let $\ul g^t_{i_j}=\sum_{\vec i_{-j} \in \I(-j)} P^t(\vec i_{-j}|i_j) g_{i_j,\vec i_{-j}}$. That is, the expected utility to $j$ of using the pure action $a_{i_j}$ at time $t$.

\begin{proposition}
\label{th:SR_PW2_gen}
Let $W$ be any fitness matrix, and consider the game $G=W$. Then under the SR population dynamics, for any distribution $P^t$ and any $r\in[0,1]$,
we have
\labeq{sex_PW_gen}
{x^{t+1}_{i_j} = \frac{1}{\ol w^t} x^t_{i_j} \ul g^t_i.} 
\end{proposition}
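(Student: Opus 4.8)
The plan is to read this off directly from Lemma~\ref{lemma:SR}, which already carried out all the combinatorial work, and then simply rewrite its right-hand side in terms of conditional probabilities. Since the game is $G=W$ we have $g_{i_j,\vec i_{-j}} = w_{i_j,\vec i_{-j}}$, so the entire task reduces to recognizing the sum $\sum_{\vec i_{-j}\in\I(-j)} p^t_{i_j,\vec i_{-j}} w_{i_j,\vec i_{-j}}$ that appears in the lemma as $x^t_{i_j}\,\ul g^t_{i_j}$.

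First I would recall the defining relation between joint and conditional frequencies, namely $p^t_{i_j,\vec i_{-j}} = x^t_{i_j}\, P^t(\vec i_{-j}\mid i_j)$, which holds whenever $x^t_{i_j}>0$ by the definition of the marginal $x^t_{i_j} = \sum_{\vec i_{-j}\in\I(-j)} p^t_{i_j,\vec i_{-j}}$. Substituting this into the definition $\ul g^t_{i_j} = \sum_{\vec i_{-j}\in\I(-j)} P^t(\vec i_{-j}\mid i_j)\, g_{i_j,\vec i_{-j}}$ and multiplying through by $x^t_{i_j}$ yields $x^t_{i_j}\,\ul g^t_{i_j} = \sum_{\vec i_{-j}\in\I(-j)} p^t_{i_j,\vec i_{-j}}\, w_{i_j,\vec i_{-j}}$, which is exactly the summation on the right-hand side of Lemma~\ref{lemma:SR}. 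Chaining the two gives $x^{t+1}_{i_j} = \frac{1}{\ol w^t}\sum_{\vec i_{-j}\in\I(-j)} p^t_{i_j,\vec i_{-j}} w_{i_j,\vec i_{-j}} = \frac{1}{\ol w^t} x^t_{i_j}\,\ul g^t_{i_j}$, as claimed.

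Honestly there is essentially no obstacle here, since the substantive content lives in Lemma~\ref{lemma:SR}; the statement is just a change of variables into the game-theoretic language of $\ul g^t$. The only point deserving a word of care is the degenerate case $x^t_{i_j}=0$, in which the conditional probability $P^t(\vec i_{-j}\mid i_j)$ is undefined. In that case I would observe that every full genotype with allele $i_j$ at locus $j$ has zero frequency, so inspecting Eq.~\eqref{eq:pop} shows that both the recombination term (in which one of the two parental factors always carries $i_j$ and hence vanishes) and the selection term vanish; thus $x^{t+1}_{i_j}=0$ as well, and the identity holds trivially under the convention that the right-hand side is $0$. Since the arguments for all loci $j\in K$ are identical, this also completes the general $k$-locus extension of Proposition~\ref{th:SR_PW}.
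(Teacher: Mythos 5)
Your proof is correct and follows exactly the paper's route: invoke Lemma~\ref{lemma:SR} and then factor $p^t_{i_j,\vec i_{-j}} = x^t_{i_j}\,P^t(\vec i_{-j}\mid i_j)$ so that the sum $\sum_{\vec i_{-j}\in\I(-j)} p^t_{i_j,\vec i_{-j}} w_{i_j,\vec i_{-j}}$ is recognized as $x^t_{i_j}\,\ul g^t_{i_j}$ with $g=w$. Your explicit handling of the degenerate case $x^t_{i_j}=0$ is a small gain in rigor over the paper's version (whose first displayed line even contains a typographical slip, showing both $p^t_{i_j,\vec i_{-j}}$ and the conditional $P^t(\vec i_{-j}\mid i_j)$ in the same product), but the underlying argument is identical.
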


\begin{proof} Applying Lemma~\ref{lemma:SR},
 \begin{align*}
x^{t+1}_{i_j}=&  \frac{1}{\ol w^t} \sum_{\vec i_{-j} \in \I(-j)} p^t_{i_j,\vec i_{-j}} P^t(\vec i_{-j}|i_j) w_{i_j,\vec i_{-j}} 
= \frac{1}{\ol w^t} \sum_{\vec i_{-j} \in \I(-j)}  x^t_{i_j}  w_{i_j,\vec i_{-j}}\\
=& \frac{1}{\ol w^t} x^{t}_{i_j} \sum_{\vec i_{-j} \in \I(-j)} P^t(\vec i_{-j}|i_j)w_{i_j,\vec i_{-j}}
= \frac{1}{\ol w^t} x^{t}_{i_j} \ul g^t_{i_j} \tag{By Eq.~\eqref{eq:marg}}.
\end{align*}
\end{proof}
 The multi-dimensional extension of Corollary~\ref{th:dynamics_cor_SR} follows in the same way from Proposition~\ref{th:SR_PW2_gen}.

\section{PW and product distributions}
\label{apx:differ}
Consider the ``uncorrelated'' version of the PW algorithm, which is the one used in \cite{PNAS_new}:
\labeq{indi}
{x^{t+1}_i = x^t_i \sum_j y^t_j w_{ij} = x^t_i \ol g^t_i.}

In \cite{PNAS_new} there is no distinction between RS and SR. The formal definition that they use  coincides with RS (p.~1 of the SI text), whereas in an earlier draft they used SR (p.5 in \cite{ppad}). 
In a private communication the authors clarified that they use SR and RS interchangeably, since under weak selection they are very close. 


\paragraph{Divergence from the Wright manifold}
Chastain et al.~\shortcite{PNAS_new} justify the assumption that $P^t$ is a product distribution by quoting the result of Nagylaki~\shortcite{nagylaki1993evolution}, which states for any process $(P^t)_{t}$ there is a ``corresponding process'' on the Wright manifold, which converges to the same point. However the authors do not explain why this corresponding process is the one they assume in their paper. To further stress this point, we will show that the population dynamics and the PW algorithm used in \cite{PNAS_new} can significantly differ (we saw empirically that the marginals also differ significantly). 

Consider the $2\times 2$ fitness matrix where $w_{11} = 1+s$, and $w_{ij}=1$ otherwise. For simplicity assume first that $r=0$ (thus SR and RS are the same). Suppose that $P^0$ is the uniform distribution (that is on the Wright manifold). While the population dynamics will eventually converge to $p_{11}=1$, there is some $t$ s.t. $P^t$ is approximately $\left(\begin{array}{cc}5/8 & 1/8  \\ 1/8 & 1/8 \end{array}\right)$. 
Thus $\left\|P^t - \vec x^t \times \vec y^t\right\|_d > \frac18 = \Omega(1)$ for any $\ell_d$ norm and regardless of the selection strength $s$. The gap is still large for other small constant values of  $r$ (including when $s\ll r$). Thus the population dynamics can get very far from the Wright manifold. \rmr{check this}

In the example above both processes will converge to the same outcome ($p_{11}=1$), but at different rates.

\paragraph{Difference in convergence}
One can also construct examples that converge to different outcomes. For example, for $s=0.01$ consider $W = \left(\begin{array}{cc}1.01& 1  \\ 1 & 1.0099603 \end{array}\right)$. If the initial distribution is $\vec x^0 = \vec y^t = (0.499,0.501)$, then the (independent) PW dynamics converges to $p_{22}=1$, whereas for $r=0.5$ the SR dynamics converges to $p_11=1$. Such examples can be constructed for any values of $s>0$  and $r<1$.

\bibliography{note}
%
\end{document}